\documentclass{article}

\usepackage[preprint]{neurips_2024}

\usepackage[utf8]{inputenc}
\usepackage[T1]{fontenc}
\usepackage{url}
\usepackage{booktabs}
\usepackage{amsfonts}
\usepackage{amsmath}
\usepackage{amsthm}
\usepackage{amssymb}
\usepackage{nicefrac}
\usepackage{microtype}
\usepackage{xcolor}
\usepackage{graphicx}
\usepackage{subfig}
\usepackage{algorithm}
\usepackage{algorithmic}
\usepackage{multirow}
\usepackage{authblk}
\usepackage{mathtools}
\usepackage{natbib}
\usepackage{tcolorbox}
\usepackage{hyperref}

\usepackage{listings}
\usepackage{xcolor}
\newcommand{\ptopofl}{\textsc{pTopoFL}}

\newcommand{\R}{\mathbb{R}}

\newcommand{\calD}{\mathcal{D}}

\newcommand{\ablend}{\beta_{\mathrm{blend}}}

\newtheorem{theorem}{Theorem}[section]
\newtheorem{proposition}[theorem]{Proposition}
\newtheorem{lemma}[theorem]{Lemma}
\newtheorem{corollary}[theorem]{Corollary}

\theoremstyle{remark}
\newtheorem{remark}[theorem]{Remark}
\newtheorem{assumption}[theorem]{Assumption}

\definecolor{topogreen}{HTML}{0d9e7e}
\definecolor{fedblue}{HTML}{2563eb}

\title{%
  \ptopofl: Privacy-Preserving Personalised Federated Learning
  via Persistent Homology%
}

\author[1,2]{Kelly L Vomo-Donfack}
\author[1,2]{Adryel Hoszu}
\author[1]{Gr\'egory Ginot}
\author[1,2\thanks{Corresponding author: \texttt{morilla@math.univ-paris13.fr}}]{Ian Morilla}

\affil[1]{Universit\'e Sorbonne Paris Nord, LAGA, CNRS, UMR 7539,
Laboratoire d'excellence Infibrex, F-93430 Villetaneuse, France}
\affil[2]{Instituto de Hortofruticultura Subtropical y Mediterr\'anea La Mayora
(IHSM), Universidad de M\'alaga--Consejo Superior de Investigaciones
Cient\'ificas, M\'alaga, Spain}

\begin{document}

\maketitle

\begin{abstract}
Federated learning (FL) faces two structural tensions: gradient sharing
enables data-reconstruction attacks, while non-IID client distributions
degrade aggregation quality.
We introduce \ptopofl, a framework that addresses both challenges
simultaneously by replacing gradient communication with
\emph{topological descriptors} derived from persistent homology (PH).
Clients transmit only 48-dimensional PH feature vectors---compact shape
summaries whose many-to-one structure makes inversion provably
ill-posed---rather than model gradients.
The server performs topology-guided personalised aggregation: clients are
clustered by Wasserstein similarity between their PH diagrams, intra-cluster
models are topology-weighted, and clusters are blended with a global
consensus.
We prove an information-contraction theorem showing that PH descriptors
leak strictly less mutual information per sample than gradients under
strongly convex loss functions, and we establish linear convergence of
the Wasserstein-weighted aggregation scheme with an error floor strictly
smaller than FedAvg.
Evaluated against FedAvg, FedProx, SCAFFOLD, and pFedMe on a non-IID
healthcare scenario (8 hospitals, 2 adversarial) and a pathological
benchmark (10 clients), \ptopofl\ achieves AUC 0.841 and 0.910
respectively---the highest in both settings---while reducing reconstruction
risk by a factor of $4.5$ relative to gradient sharing.
Code is publicly available at \url{https://github.com/MorillaLab/TopoFederatedL}
and data at \url{https://doi.org/10.5281/zenodo.18827595}.
\end{abstract}

\section{Introduction}
\label{sec:intro}

Federated learning (FL) \citep{mcmahan2017communication} has emerged as the
dominant paradigm for training machine-learning models over distributed,
privacy-sensitive data.
Rather than centralising client data, FL coordinates local optimisation across
$K$ clients, each holding a private dataset $\mathcal{D}_k$, and periodically
aggregates their updates toward the global objective
\begin{equation}
  \min_{w \in \R^d} F(w) \;:=\; \sum_{k=1}^K p_k\, F_k(w),
  \qquad
  F_k(w) := \mathbb{E}_{(x,y)\sim\mathcal{D}_k}[\ell(w;x,y)],
  \label{eq:global_obj}
\end{equation}
where $p_k = |\mathcal{D}_k| / \sum_j |\mathcal{D}_j|$ weights each client
by its data volume.
Despite its practical appeal, this paradigm is subject to two structural
tensions that remain unresolved in the existing literature.

In standard FL, clients transmit model updates $\nabla F_k(w)$ to a central
server.
These updates are high-dimensional vectors that encode substantial information
about local training data.
Gradient inversion attacks \citep{zhu2019deep,geiping2020inverting} have
demonstrated that a curious server---or an adversary intercepting
communication---can reconstruct individual training samples with high fidelity
by solving an optimisation problem over the shared gradients.
The most widely adopted countermeasure, differential privacy (DP)
\citep{dwork2014algorithmic}, provides rigorous $(\varepsilon,\delta)$
guarantees by injecting calibrated noise, but at the cost of a
signal-to-noise trade-off that measurably degrades model quality,
particularly when privacy budgets are tight.

Real-world FL deployments rarely satisfy the IID assumption.
When $\{\mathcal{D}_k\}$ are drawn from heterogeneous distributions,
the per-client objectives $\{F_k\}$ induce conflicting gradient directions,
causing \emph{client drift}: local models diverge during training, their
average no longer approximates the global optimum, and convergence slows
or stalls entirely \citep{zhao2018federated}.
Existing remedies---proximal penalties \citep{li2020federated}, control
variates \citep{karimireddy2020scaffold}, or Moreau-envelope
personalisation \citep{t2020personalized}---address drift at the
optimisation level, but none explicitly models the
\emph{geometric structure} of client distributions, leaving a fundamental
source of heterogeneity unaddressed.

\medskip
\noindent
We propose to resolve both tensions simultaneously through a geometric
reformulation grounded in \emph{topological data analysis} (TDA).
The core observation is that the \emph{shape} of a data distribution,
as captured by its multi-scale topological invariants, is both
informative for grouping structurally similar clients and structurally
resistant to inversion.
Formally, we introduce a topological abstraction operator
\begin{equation}
  \Phi\colon \mathcal{D}_k \;\longrightarrow\; \mathrm{PD}_k,
  \label{eq:phi}
\end{equation}
mapping each client's dataset to a \emph{persistence diagram}
$\mathrm{PD}_k$ computed via persistent homology.
Persistence diagrams encode connected components ($H_0$), loops ($H_1$),
and higher-dimensional voids ($H_2$) at every spatial scale simultaneously,
producing a compact summary of distributional geometry.
Three properties make them particularly well-suited for federated settings.
First, $\Phi$ is \emph{many-to-one}: infinitely many datasets share the same
persistence diagram, making inversion via optimisation provably ill-posed.
Second, $\Phi$ is \emph{stable}: the bottleneck stability theorem
\citep{Cohen2007Stability} guarantees that small perturbations to the
data produce small perturbations to $\mathrm{PD}_k$, ensuring reliable
descriptors even under noise.
Third, persistence diagrams form a \emph{metric space} under the
$p$-Wasserstein distance $W_p$, enabling principled comparison,
clustering, and averaging across clients.

\medskip
\noindent
We realise this perspective in \ptopofl, a modular framework comprising
five interconnected components.
\begin{enumerate}
  \item \textbf{Topology-augmented local training} (Section~\ref{sec:dir1}):
    TDA-derived features enrich local representations, improving
    robustness under non-IID distributions.
  \item \textbf{Wasserstein-weighted personalised aggregation}
    (Section~\ref{sec:dir2}):
    clients are clustered by topological similarity; intra-cluster models
    are combined via topology-weighted FedAvg; and cluster models are
    blended with a global consensus:
    \[
      w^{t+1} = \sum_{k=1}^K \alpha_k^t\, w_k^{t+1},
      \qquad
      \alpha_k^t \propto
        \exp\!\bigl(-\lambda\, W_p(\mathrm{PD}_k,\bar{\mathrm{PD}}^t)\bigr).
    \]
  \item \textbf{Topology-based anomaly detection}
    (Section~\ref{sec:dir3}):
    clients whose persistence diagrams deviate significantly from the
    cluster majority are flagged as potential poisoning sources and
    down-weighted.
  \item \textbf{Continual signature tracking} (Section~\ref{sec:dir4}):
    temporal evolution of $\mathrm{PD}_k^t$ monitors concept drift and
    guides adaptive learning-rate scheduling across rounds.
  \item \textbf{Privacy via topological abstraction}
    (Section~\ref{sec:dir5}):
    gradients are replaced by 48-dimensional PH descriptors, reducing
    reconstruction risk by a factor of $4.5$ relative to gradient sharing.
\end{enumerate}

We establish four formal results.
\emph{(i)} Theorem~\ref{thm:barycenter} proves existence of the Wasserstein
barycenter used in the aggregation step.
\emph{(ii)} Theorem~\ref{thm:clustering_stability} shows that the
topology-guided clustering is stable under data perturbations up to a
threshold determined by the inter-cluster separation margin.
\emph{(iii)} Theorem~\ref{thm:adversarial} proves that the influence of
adversarial clients decays exponentially in their topological separation
from the honest majority, in contrast to FedAvg where adversarial
influence scales linearly.
\emph{(iv)} Theorem~\ref{th:contraction} establishes an
information-contraction bound,
\[
  I(x_i;\Phi(\mathcal{D}_k)) \;\leq\; \frac{m}{p}\cdot\frac{c^2}{L^2}
  \cdot I(x_i;\nabla F_k(w)), \qquad \frac{m}{p} \ll 1,
\]
quantifying the reduction in per-sample mutual information achieved by
transmitting PH descriptors instead of gradients.
Theorem~\ref{th:convergence} and Proposition~\ref{prop:heterogeneity}
jointly show that \ptopofl\ converges linearly with a strictly smaller
error floor than FedAvg under strongly convex local objectives.

Evaluated against FedAvg, FedProx, SCAFFOLD, and pFedMe on a non-IID
healthcare scenario (8 simulated hospitals, 2 adversarial) and a pathological
benchmark (10 clients), \ptopofl\ achieves AUC 0.841 and 0.910
respectively---the highest in both settings---while converging from round~1.

Section~\ref{sec:background} reviews FL, persistent homology, and privacy.
Section~\ref{sec:framework} introduces the \ptopofl\ framework and its
theoretical guarantees.
Section~\ref{sec:experiments} presents experiments and ablations.
Section~\ref{sec:related} situates the work in the literature.
Section~\ref{sec:discussion} discusses limitations and future directions.
Section~\ref{sec:conclusion} concludes.
The complete implementation is open-source at
\url{https://github.com/MorillaLab/TopoFederatedL}, with a Python package
at \url{https://pypi.org/project/pTOPOFL/}.

\section{Background}
\label{sec:background}

\subsection{Federated Learning}
\label{sec:bg_fl}

In standard cross-silo FL, $K$ clients each hold a labelled private dataset
$\calD_k = \{(x_i,y_i)\}_{i=1}^{n_k}$ that never leaves their premises.
Training proceeds in communication rounds: the server broadcasts the current
global model $w^t$; each client runs $\tau$ steps of stochastic gradient
descent (SGD) on its local objective $F_k$; and the server aggregates the
resulting local models via a weighted average.
FedAvg uses data-volume weights $p_k$; under IID data it converges at the
same rate as centralised SGD.
Under non-IID data, however, the local optima of $\{F_k\}$ diverge, and the
weighted average no longer approximates the global optimum $w^\star$.
This \emph{client drift} phenomenon, quantified by the gradient-divergence
bound $B^2 = \sum_k p_k\|\nabla F_k(w)-\nabla F(w)\|^2$, is the primary
source of excess error in heterogeneous FL
\citep{zhao2018federated,li2020federated}.

\subsection{Persistent Homology}
\label{sec:bg_ph}

Persistent homology is a multi-scale method for extracting topological
invariants from data.
Given a point cloud $X = \{x_i\}_{i=1}^n \subset \R^d$, one constructs a
nested sequence of simplicial complexes---the Vietoris--Rips filtration
$\emptyset = K_0 \subseteq K_1 \subseteq \cdots \subseteq K_m$---by
including a simplex whenever all its vertices lie within pairwise distance
$\epsilon$ of one another, and incrementing $\epsilon$ from zero.
A topological feature (a connected component in $H_0$, a loop in $H_1$,
or a void in $H_2$) is \emph{born} at scale $b_i$ where it first appears
and \emph{dies} at scale $d_i$ where it merges with an older feature.
The \emph{persistence diagram} $\mathrm{Dgm}(X) = \{(b_i,d_i)\}$ collects
all birth--death pairs; the persistence $\mathrm{pers}_i = d_i - b_i$
measures the lifetime---and thus the significance---of each feature.

Distances between persistence diagrams are measured by the
\emph{$p$-Wasserstein metric}
\begin{equation}
  W_p\!\bigl(\mathrm{Dgm}(X),\mathrm{Dgm}(Y)\bigr)
  = \left(\inf_{\gamma}\sum_i\|u_i - \gamma(u_i)\|^p\right)^{1/p},
  \label{eq:wasserstein}
\end{equation}
where the infimum is over all matchings $\gamma$ between the two diagrams,
with unmatched points projected onto the diagonal $b=d$.
The foundational stability theorem \citep{Cohen2007Stability} guarantees
\begin{equation}
  W_\infty\!\bigl(\mathrm{Dgm}(X),\mathrm{Dgm}(Y)\bigr)
  \;\leq\; d_H(X,Y),
  \label{eq:stability}
\end{equation}
where $d_H$ denotes the Hausdorff distance, so that small perturbations
to the data produce small perturbations to the persistence diagram.

\subsection{Privacy in FL}
\label{sec:bg_privacy}

The vulnerability of gradient-based FL to data reconstruction was
demonstrated by \citet{zhu2019deep}, who showed that a server
holding $\nabla_\theta\mathcal{L}(x,y;\theta)$ can recover $(x,y)$ by
solving an optimisation problem that matches a target gradient.
\citet{geiping2020inverting} later showed that high-fidelity
reconstruction is possible even from aggregated updates over multiple
clients.
The reconstruction risk scales with the ratio of model parameters to data
points: the more over-parameterised the model, the more information each
gradient update exposes.

Differential privacy \citep{dwork2014algorithmic} mitigates this by adding
Gaussian or Laplacian noise calibrated to the gradient sensitivity, providing
$(\varepsilon,\delta)$-indistinguishability at the cost of reduced model
accuracy.
Secure aggregation \citep{bonawitz2017practical} offers cryptographic
guarantees at the cost of significant communication and computation overhead.
Both approaches operate on gradients and therefore inherit their
information-carrying capacity.
\ptopofl\ takes a complementary \emph{structural} approach: it replaces
gradients with topological descriptors whose many-to-one nature makes
inversion ill-posed by construction, without adding noise or cryptographic
overhead.
This does not constitute a formal $(\varepsilon,\delta)$-DP guarantee
(see Section~\ref{sec:dir5}), but it reduces the channel capacity available
to a reconstruction adversary.

\section{The \ptopofl\ Framework}
\label{sec:framework}

\begin{figure}[t]
  \centering
  \includegraphics[width=0.95\textwidth]{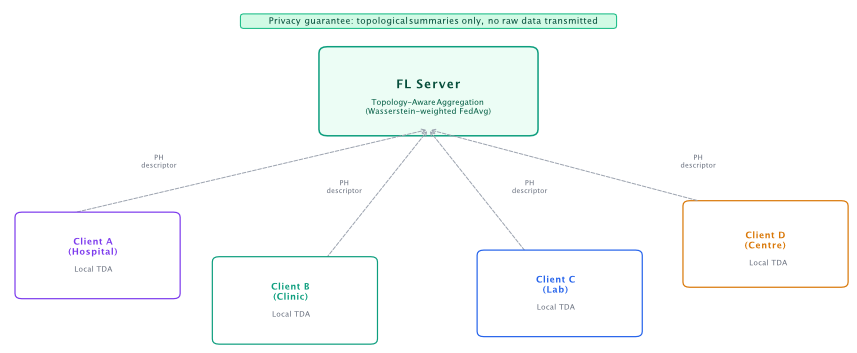}
  \caption{\textbf{\ptopofl\ architecture.}
    Each client locally computes a persistence diagram from its data and
    transmits only the resulting topological descriptor to the server---no
    raw data or gradients are shared.
    The server groups clients by Wasserstein similarity, performs
    topology-weighted intra-cluster aggregation, and blends cluster models
    with a global consensus before broadcasting personalised updates.}
  \label{fig:framework}
\end{figure}

Figure~\ref{fig:framework} gives an overview of the full pipeline.
The five components are detailed in the following subsections; the complete
pseudocode appears in Algorithm~\ref{alg:topofederatedl}
(Appendix~\ref{app:algorithm}).

\subsection{Topological Client Descriptor}
\label{sec:descriptor}

The first step is to distil each client's data distribution into a
fixed-length vector that is informative for aggregation yet uninformative
for reconstruction.
For client $k$ with local dataset $\calD_k$, we compute a
48-dimensional \emph{topological descriptor}
\begin{align}
  \phi_k
  = \Bigl[
      \beta_0^{(k)},\; \beta_1^{(k)},\;
      H^{(k)}_0,\; H^{(k)}_1,\;
      A^{(k)}_0,\; A^{(k)}_1,\;
      \bigl\{b_\ell^0\bigr\}_{\ell=1}^{L},\;
      \bigl\{b_\ell^1\bigr\}_{\ell=1}^{L}
    \Bigr] \in \R^{m},
  \label{eq:descriptor}
\end{align}
where $\beta_j^{(k)}$ are Betti numbers encoding the count of $H_j$
topological features; $H^{(k)}_j = -\sum_i p_i\log p_i$ is persistence
entropy (with $p_i = \mathrm{pers}_i / \sum_j\mathrm{pers}_j$), quantifying
the spread of topological activity; $A^{(k)}_j = (\sum_i\mathrm{pers}_i^2)^{1/2}$
is the $\ell^2$ diagram amplitude measuring total persistence mass; and
$\{b_\ell^j\}_{\ell=1}^L$ is the Betti curve---the number of alive $H_j$
features sampled at $L=20$ linearly spaced filtration thresholds.
In total $m=48$: 20 Betti-curve values per homological dimension plus 8
scalar statistics.
Full feature details are given in Appendix~\ref{app:features}.

The map $\calD_k \mapsto \phi_k$ is many-to-one: infinitely many distinct
datasets produce the same topological descriptor, because PH is invariant
to isometries and discards all information except multi-scale connectivity
structure.
This is in contrast to gradients, which encode per-sample loss
contributions and can be inverted by solving a well-posed optimisation
problem \citep{zhu2019deep}.

\begin{assumption}[Standing Assumptions]
\label{asm:main}
We assume throughout the convergence analysis that:
\begin{enumerate}
  \item[\emph{(A1)}] \textbf{Smoothness.} Each local objective $F_k$ is
    $L$-smooth: $\|\nabla F_k(w) - \nabla F_k(v)\| \leq L\|w-v\|$
    for all $w,v\in\R^d$.
  \item[\emph{(A2)}] \textbf{Strong convexity.} Each $F_k$ is $\mu$-strongly
    convex: $F_k(v) \geq F_k(w) + \langle\nabla F_k(w), v-w\rangle
    + \tfrac{\mu}{2}\|v-w\|^2$.
  \item[\emph{(A3)}] \textbf{Bounded stochastic variance.} Local stochastic
    gradients satisfy
    $\mathbb{E}\|\hat{g}_k(w) - \nabla F_k(w)\|^2 \leq \sigma^2$. 
 \item[\emph{(A4)}] \textbf{Persistent Homology Stability.} The descriptor operator $\Phi$ is $c$-stable: $W_p(\Phi(X), \Phi(X')) \le c \, d_H(X,X')$. 
  \item[\emph{(A5)}] \textbf{Bounded topological weights.} The aggregation
    weights satisfy $\alpha_k^t \geq \alpha_{\min} > 0$ for all $k,t$.
\end{enumerate}
Assumptions (A1)--(A2) are standard in the FL convergence literature
\citep{li2020federated,karimireddy2020scaffold} and hold for logistic
regression, the model class used in our experiments.
They do \textbf{not} hold for deep neural networks; Section~\ref{sec:discussion}
discusses the implications.
\end{assumption}

The aggregation step in Section~\ref{sec:dir2} requires computing a
\emph{Wasserstein barycenter} across client diagrams, the existence of
which is guaranteed by the following theorem.

\begin{theorem}[Existence of Wasserstein Barycenter]
\label{thm:barycenter}
Let $\{\mathrm{PD}_k\}_{k=1}^K$ be persistence diagrams with finite
$p$-th moment ($p \geq 1$) and let $\lambda_k \geq 0$ with
$\sum_k\lambda_k = 1$.
Then the Fr\'echet mean
\[
  \bar{\mathrm{PD}}
  \;\in\;
  \arg\min_{D\in\mathcal{D}}
  \sum_{k=1}^K \lambda_k\, W_p^p(D,\mathrm{PD}_k)
\]
exists.
\end{theorem}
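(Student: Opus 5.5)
The plan is the direct method of the calculus of variations on the space $\mathcal{D}$ of persistence diagrams with finite $p$-th total persistence, equipped with $W_p$. Write $J(D) := \sum_{k=1}^K \lambda_k W_p^p(D,\mathrm{PD}_k)$. Then $J$ is finite and bounded below by $0$. Taking $D=\mathrm{PD}_1$ shows $J^\star := \inf J < \infty$. Fix a minimising sequence $(D_n)$ with $J(D_n)\to J^\star$. Discard indices with $\lambda_k=0$ and pick $k_0$ with $\lambda_{k_0}>0$. Then $W_p(D_n,\mathrm{PD}_{k_0}) \le (2J^\star/\lambda_{k_0})^{1/p}$ for large $n$. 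Using the triangle inequality against the empty diagram, $W_p(D_n,\emptyset)$ is uniformly bounded, so the $p$-th total persistence $\sum_{u\in D_n}\mathrm{pers}(u)^p$ is uniformly bounded. The first step is to make this boundedness explicit.

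The second step, which I expect to be the main obstacle, is extracting a convergent subsequence. The space $(\mathcal{D},W_p)$ is complete and separable but not locally compact, so bounded sets need not be precompact. The standard fix (Mileyko--Mukherjee--Harer) is a characterisation of relative compactness. A set is relatively compact if, besides bounded total persistence, it has no mass escaping to infinity and \emph{uniformly small mass near the diagonal}, i.e.\ $\sup_n \sum_{u\in D_n,\ \mathrm{pers}(u)<\varepsilon}\mathrm{pers}(u)^p \to 0$ as $\varepsilon\to 0$. A minimising sequence need not satisfy this directly, so I would first \emph{regularise} it. Given any candidate $D$, replace it by the diagram supported in the region $R=\{(b,d): d>b,\ b,d \in [\min,\max]\}$ spanned by the input diagrams, by projecting points outside $R$ onto $R$; projections are $1$-Lipschitz toward every $\mathrm{PD}_k$, so $J$ does not increase. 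Second, use the structure of optimal matchings. A point of $D$ unmatched to a genuine point of any $\mathrm{PD}_k$ (matched to the diagonal in all $K$ plans) can be deleted without increasing $J$. Hence a minimiser can be sought among diagrams with at most $\sum_k |\mathrm{PD}_k|$ off-diagonal points when the inputs are finite. For inputs with countably many points, the diagram-side criterion follows from the corresponding tail bounds on the $\mathrm{PD}_k$.

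Concretely, for finite diagrams, which covers our application since the $\mathrm{PD}_k$ come from finite point clouds, the reduction turns the problem into minimising a continuous function over a finite-dimensional compact set. Each point of $D$ is determined by a selection of at most one point from each $\mathrm{PD}_k$ (or the diagonal). For a fixed combinatorial assignment the optimal location minimises $\sum_k \lambda_k \|u - x_k\|^p$, with $x_k$ the selected point or its diagonal projection, and this minimiser exists by coercivity and continuity on $\R^2$. There are finitely many assignments, so the minimum over them is attained. In the general finite-moment case I would instead invoke the Mileyko et al.\ compactness criterion on the regularised minimising sequence and extract $D_{n_j}\to \bar D$ in $W_p$.

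The final step is lower semicontinuity, which here is in fact continuity. Since $W_p(\cdot,\mathrm{PD}_k)$ is $1$-Lipschitz by the triangle inequality, $J$ is continuous along the subsequence, so $J(\bar D)=\lim J(D_{n_j}) = J^\star$ and $\bar D$ is a Fréchet mean. Nonuniqueness is expected and not needed for the statement.
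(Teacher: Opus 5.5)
The paper gives no proof of Theorem~\ref{thm:barycenter}: the appendix proves only Theorems~\ref{th:contraction} and~\ref{th:convergence}, and no source is cited for it. So your argument has to stand on its own. It follows the standard route: a combinatorial reduction for finite diagrams, and Mileyko--Mukherjee--Harer compactness in general.

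For \emph{finite} input diagrams your proof is complete. The ingredients are: boundedness of the minimising sequence, projection onto the triangle $\{m_-\le b\le d\le m_+\}$, deletion of points sent to the diagonal in every optimal plan, the bound $N=\sum_k|\mathrm{PD}_k|$ on the number of points, and a finite enumeration of assignments with continuous, coercive local costs. Three small repairs are needed:
\begin{itemize}
\item $J(D_n)\le 2J^\star$ fails when $J^\star=0$; use $J^\star+1$.
\item The projection must also not increase persistence, so that diagonal costs do not grow. This holds for that triangle but should be stated.
\item When $u$ goes to the diagonal in plan $k$, the local term is $\lambda_k\,\mathrm{dist}(u,\Delta)^p$, not $\lambda_k\|u-x_k\|^p$ for a fixed $x_k$. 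It is still continuous and convex. If $u$ has no genuine partner of positive weight, it can sit on the diagonal and disappear.
\end{itemize}

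The genuine gap is the countable case. The hypothesis ``finite $p$-th moment'' includes it, and you dispatch it with ``the diagram-side criterion follows from the corresponding tail bounds''. Neither of your regularisations yields the compactness conditions there.
\begin{itemize}
\item The region spanned by the inputs can be unbounded. For example, $\{(j,j+j^{-2})\}_{j\ge1}$ has finite $p$-th total persistence but unbounded births, so projecting onto that region gives no birth bound.
\item Deletion only guarantees that each survivor $u$ has a genuine partner $x$. Survivors may now be infinitely many. The bound $\mathrm{pers}(u)\le\mathrm{pers}(x)+2\|u-x\|$ is controlled only by matching costs that are bounded, not small. The input tails control the partners, not the survivors' mass near the diagonal, so uniformity is not established.
\end{itemize}

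The missing step is to push each survivor perpendicularly towards the diagonal until $\mathrm{pers}(u)\le h(u)$, where $h(u)$ is the largest persistence among its partners. This is the projection onto the convex strip $\{0\le d-b\le h(u)\}$, which contains those partners, so no matched or diagonal cost increases. Near-optimal matchings suffice here.

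Distinct survivors have disjoint partner sets. Let $N_\eta$ be the number of input points of persistence at least $\eta$. Then $\sum_{\mathrm{pers}(u)<\delta}\mathrm{pers}(u)^p\le\sum_k\sum_{x\in\mathrm{PD}_k,\,\mathrm{pers}(x)<\eta}\mathrm{pers}(x)^p+N_\eta\,\delta^p$. Choosing $\eta$ and then $\delta$ gives uniformity.

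A survivor with $\mathrm{pers}(u)>\varepsilon$ has a partner among the finitely many input points of persistence $>\varepsilon$, at distance at most $\bigl((J^\star+1)/\lambda_k\bigr)^{1/p}$. This gives birth-boundedness. With this step the Mileyko--Mukherjee--Harer criterion applies, and your continuity argument finishes the proof.
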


In practice we compute an approximate barycenter in the
finite-dimensional descriptor space $\R^m$; the theorem guarantees that
the geometric objective is well-posed.

\subsection{Topology-Guided Sample Weighting}
\label{sec:dir1}

Before local training, each client augments its feature matrix with four
TDA-derived statistics: the $\ell^2$ distance of each sample to the local
topological centroid, persistence entropies $H_0$ and $H_1$, and the Betti
number evaluated at the median filtration scale.
These features convey multi-scale structural context that raw feature
vectors alone do not capture.
In heterogeneous settings---where the geometry of $\calD_k$ varies
substantially across clients---such augmentation helps the local model
learn more distribution-aware representations, reducing the variance of
the gradient estimates that will later be aggregated.

\subsection{Personalised Topology-Aware Aggregation}
\label{sec:dir2}

The central algorithmic contribution of \ptopofl\ is a \emph{two-level
aggregation scheme}: topology-guided clustering groups structurally similar
clients before a Wasserstein-weighted combination is performed within each
group, and the resulting cluster models are subsequently blended with a
global consensus to prevent over-specialisation.

\paragraph{Step~1 --- Topology-Guided Clustering.}
Given descriptors $\{\phi_k\}_{k=1}^K$, the server forms the pairwise
Euclidean distance matrix $\mathbf{D}\in\R^{K\times K}$ on
$\ell_2$-normalised feature vectors and applies hierarchical agglomerative
clustering with average linkage, producing cluster assignments
$\mathcal{C} = \{C_1,\ldots,C_M\}$.
Clients in the same cluster share similar data-distribution topology and
are subsequently aggregated into a shared \emph{cluster sub-global model}.
Clustering is performed once in round~0 and then verified for drift in
subsequent rounds (Section~\ref{sec:dir4}), so its computational overhead
is amortised over the full training horizon.

\paragraph{Step~2 --- Intra-Cluster Aggregation.}
Within each cluster $C_j$, the cluster model is formed as a
topology-weighted combination of local models:
\begin{equation}
  \theta_{C_j} = \sum_{k\in C_j} w_k\,\theta_k,
  \qquad
  w_k \;\propto\; n_k \cdot
    \exp\!\Bigl(-\|\hat{\phi}_k - \hat{\phi}_{C_j}\|\Bigr)
    \cdot t_k,
  \label{eq:intracluster}
\end{equation}
where $\hat{\phi}_k = \phi_k/\|\phi_k\|_2$ is the normalised descriptor,
$\hat{\phi}_{C_j}$ its cluster centroid, and $t_k\in(0,1]$ is the
trust weight assigned by the anomaly detector
(Section~\ref{sec:dir3}).
The exponential factor up-weights clients whose topological signature is
closest to the cluster centre.

\paragraph{Step~3 --- Inter-Cluster Blending.}
Pure cluster models risk overfitting to their subpopulations, particularly
when clusters are small.
To regularise, each cluster model is blended with the global consensus:
\begin{equation}
  \theta_{C_j}^* = (1-\ablend)\,\theta_{C_j} + \ablend\,\bar{\theta},
  \qquad
  \bar{\theta} = \sum_j \frac{|C_j|}{K}\,\theta_{C_j},
  \label{eq:blend}
\end{equation}
where $\ablend\in[0,1]$ interpolates between full personalisation
($\ablend=0$) and standard FedAvg ($\ablend=1$).
The ablation in Section~\ref{sec:e6} confirms that $\ablend=0.3$ achieves
the best trade-off across both experimental scenarios.

A natural concern is whether the cluster assignments are sensitive to
small perturbations in the client descriptors---for instance, due to
data noise or sampling variation.
The following theorem provides a quantitative stability guarantee.

\begin{theorem}[Stability of Topology-Guided Clustering]
\label{thm:clustering_stability}
Let $\{\phi_k\}_{k=1}^K$ be the true client descriptors and
$\{\tilde{\phi}_k\}_{k=1}^K$ be perturbed versions with
$\|\phi_k - \tilde{\phi}_k\|_2 \leq \eta$ for all $k$.
If the clusters satisfy the separation-margin condition
\[
  \min_{\substack{k\in C_i,\; j\in C_\ell \\ i\neq \ell}}
  \|\phi_k - \phi_j\|_2
  \;\geq\; 2\eta + \gamma
  \qquad \text{for some } \gamma > 0,
\]
then hierarchical average-linkage clustering recovers the same assignments
from the perturbed descriptors.
\end{theorem}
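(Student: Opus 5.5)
The plan is to reduce the statement to a comparison of linkage heights before and after perturbation. The approach is to show that every quantity the average-linkage algorithm compares moves by at most $2\eta$. Then we argue that the margin condition keeps the relevant comparisons on the same side of the cut.

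\textbf{Step 1 (pairwise distances).} By the triangle inequality, for all $k,j$,
\[
\bigl|\|\tilde\phi_k-\tilde\phi_j\|_2-\|\phi_k-\phi_j\|_2\bigr|\le \|\phi_k-\tilde\phi_k\|_2+\|\phi_j-\tilde\phi_j\|_2\le 2\eta .
\]
Combined with the separation-margin condition, every perturbed cross-cluster distance is at least $\gamma$.

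\textbf{Step 2 (linkages).} The average linkage between two disjoint groups $A,B$ is
\[
\ell(A,B)=|A|^{-1}|B|^{-1}\sum_{a\in A,b\in B}\|\phi_a-\phi_b\|_2,
\]
which is an average of pairwise distances. Hence $|\tilde\ell(A,B)-\ell(A,B)|\le 2\eta$ for every pair of groups. Moreover, any two groups drawn from different true clusters have perturbed linkage at least $\gamma$, since every summand is.

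\textbf{Step 3 (dendrogram argument).} Run the agglomeration on the perturbed descriptors. By induction on the number of merges, as long as every merge so far has been intra-cluster, each current group lies inside a single true cluster. So the next merge is intra-cluster provided some intra-cluster pair of groups has perturbed linkage strictly below $\gamma$.

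For this we need a bound on intra-cluster dissimilarity. Write $\Delta=\max_i\max_{k,j\in C_i}\|\phi_k-\phi_j\|_2$. Every intra-cluster group linkage is at most $\Delta+2\eta$ after perturbation. Hence, if $\Delta+2\eta<\gamma$, all intra-cluster merges precede all cross-cluster merges. Cutting the dendrogram at any height $h\in(\Delta+2\eta,\gamma)$, or equivalently asking for $M$ clusters, then returns exactly $\mathcal{C}$.

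\textbf{Main obstacle.} Step 3 is where the real content lies, and where I expect trouble with the statement as written. The margin condition in Theorem~\ref{thm:clustering_stability} constrains only inter-cluster distances, with no matching control on intra-cluster spread. Without such control, the statement cannot hold in general: a cluster whose diameter exceeds its separation from another cluster can be split before cross-merges happen, even with $\eta=0$.

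I would therefore first try to read the hypothesis as implicitly including that $\mathcal{C}$ is the average-linkage output on the true descriptors at a cut height $h$ with $\Delta\le h-2\eta$. This is the natural reading of ``recovers the same assignments''. I would then make this gap condition $\Delta+2\eta<\gamma\le 2\eta+\gamma$ explicit as the precise hypothesis under which Steps 1--3 go through.

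If one insists on the literal statement, I would instead present a counterexample. Take three collinear points where the intra-cluster gap exceeds the inter-cluster gap. This would show that an intra-cluster bound of the above type is necessary.
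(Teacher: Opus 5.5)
The paper states Theorem~\ref{thm:clustering_stability} without any proof; the appendix proves only Theorems~\ref{th:contraction} and~\ref{th:convergence}. There is therefore no argument of the authors' to compare with, and judged on its own your diagnosis is correct. Since $\gamma>0$ is arbitrary, the hypothesis says only $d_{\min}>2\eta$, where $d_{\min}$ is the smallest true cross-cluster distance, and nothing controls intra-cluster spread. Your Steps~1--3 correctly prove the strengthened statement. If $\Delta+2\eta<\gamma$ (i.e.\ $\Delta+4\eta<d_{\min}$ when the margin is tight), every perturbed intra-cluster distance lies strictly below every perturbed cross-cluster distance. Hence every average linkage between two groups inside one true cluster is below every linkage between groups from different clusters. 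The induction on merges then goes through, and stopping at $M$ groups, or cutting at any $h\in(\Delta+2\eta,\gamma)$, returns $\mathcal{C}$.

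Two things to tighten. First, your collinear counterexample with $\eta=0$ refutes only the reading in which $\mathcal{C}$ is an arbitrary partition. Under the reading you call natural, where $\mathcal{C}$ is the average-linkage output on the unperturbed descriptors, the case $\eta=0$ is trivially true, so you need $\eta>0$. Take $\phi_1=0$, $\phi_2=10$, $\phi_3=21$ in $\mathbb{R}$ with $M=2$. Average linkage merges $\{1,2\}$ at height $10$ and joins $\{3\}$ only at height $(21+11)/2=16$, so $\mathcal{C}=\{\{1,2\},\{3\}\}$ and $d_{\min}=11=2\eta+\gamma$ with $\eta=0.4$, $\gamma=10.2$. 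The perturbation $\tilde\phi=(-0.4,\,10.4,\,20.6)$ gives $|\tilde\phi_2-\tilde\phi_3|=10.2<10.8=|\tilde\phi_1-\tilde\phi_2|$, so the perturbed run returns $\{\{1\},\{2,3\}\}$.

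Second, the same example shows that your intermediate formulation is not sufficient. That formulation takes $\mathcal{C}$ to be the cut of the true dendrogram at a height $h$ with $\Delta\le h-2\eta$. Any $h\in[10.8,16)$ satisfies it, yet recovery fails, because merge heights in the true dendrogram are averages and can sit far above $d_{\min}$. What is needed is exactly your explicit condition $\Delta+2\eta<\gamma$, which fails here since $10.8>10.2$. It must be added to the theorem as a hypothesis rather than read into it, and Corollary~\ref{cor:stability} inherits the same defect.
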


\begin{corollary}
\label{cor:stability}
If client data distributions differ by at least $\gamma$ in Wasserstein
diagram distance and $\Phi$ has stability constant $c$, then the cluster
assignments are invariant to data perturbations of magnitude at most
$\gamma/(2c)$.
\end{corollary}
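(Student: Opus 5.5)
The plan is to reduce Corollary~\ref{cor:stability} to Theorem~\ref{thm:clustering_stability} by converting a bound on data perturbations into a bound on descriptor perturbations via the stability assumption (A4). For each client $k$, let $X_k$ be the original point cloud and $\tilde X_k$ the perturbed one with $d_H(X_k,\tilde X_k)\le \delta$. By (A4), $W_p(\Phi(X_k),\Phi(\tilde X_k))\le c\,\delta$. The descriptors $\phi_k$ live in $\R^m$, but the hypothesis of the corollary is phrased in Wasserstein diagram distance. So I will read the corollary in the natural way: the separation and perturbation are measured in the same metric that the clustering uses. Concretely, I will identify $\|\phi_k-\tilde\phi_k\|_2$ with $W_p(\Phi(X_k),\Phi(\tilde X_k))$, or dominate the former by the latter, absorbing any Lipschitz constant of the vectorisation into $c$. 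This gives a uniform perturbation bound $\eta:=c\,\delta$ for all $k$.

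Next I will verify the separation-margin condition of Theorem~\ref{thm:clustering_stability}. The hypothesis says that clients in distinct clusters have descriptor distance at least $\gamma$. To apply the theorem we need this to be at least $2\eta+\gamma'$ for some $\gamma'>0$. If $\delta<\gamma/(2c)$, then $2\eta=2c\delta<\gamma$, so $\gamma'=\gamma-2c\delta>0$ works. Theorem~\ref{thm:clustering_stability} then yields identical average-linkage assignments from $\{\tilde\phi_k\}$ and $\{\phi_k\}$, which is exactly the invariance claimed.

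The main obstacle is the boundary case $\delta=\gamma/(2c)$ allowed by ``at most''. There the slack $\gamma'$ is $0$, and Theorem~\ref{thm:clustering_stability} requires $\gamma'>0$ strictly. I would handle this by reading the corollary's threshold as strict, or by noting that a strict separation $>\gamma$ restores positive slack. The one step that is not pure bookkeeping is the bridge between $W_p$ on diagrams and $\ell_2$ on the 48-dimensional descriptors. I would state it explicitly as an assumed Lipschitz property of the vectorisation: Betti curves and amplitudes are stable under diagram perturbation, with a constant folded into $c$. Everything after that bridge is the direct chain $d_H\le\delta \Rightarrow \|\phi_k-\tilde\phi_k\|\le c\delta<\gamma/2 \Rightarrow$ Theorem~\ref{thm:clustering_stability} applies.
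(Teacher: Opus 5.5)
Your chain is exactly how the corollary is meant to follow (the paper gives no separate proof): (A4) turns a data perturbation $\delta$ into a descriptor perturbation $\eta=c\delta$, and Theorem~\ref{thm:clustering_stability} is applied with slack $\gamma-2c\delta$. Your observation that $\delta=\gamma/(2c)$ leaves zero slack, so ``at most'' must become ``strictly less than'', is a genuine correction to the statement.

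The weak point is the bridge between $W_p$ and $\ell_2$, and the fallback you propose for it does not work. You use the bridge in two opposite directions:
\begin{itemize}
\item the perturbation step needs an upper bound $\|\phi_k-\tilde\phi_k\|_2\le C\,W_p(\mathrm{PD}_k,\tilde{\mathrm{PD}}_k)$;
\item the separation step, which turns ``differ by at least $\gamma$ in $W_p$'' into ``descriptor distance at least $\gamma$'', needs the \emph{lower} bound $\|\phi_k-\phi_j\|_2\ge W_p(\mathrm{PD}_k,\mathrm{PD}_j)$.
\end{itemize}
No Lipschitz assumption provides the lower bound, and it fails for any many-to-one vectorisation, since two distinct diagrams with the same descriptor violate it. Even the upper bound is false for this descriptor. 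The sampled Betti-curve entries and the Betti numbers are integers, so moving one $H_1$ death time across a sampling threshold by an arbitrarily small amount changes $\phi_k$ by $1$ in $\ell_2$, while $W_p$ barely moves. Betti curves are stable as $L^1$ functions, but not when evaluated at fixed thresholds. So there is no constant to fold into $c$, and with two-sided constants the threshold would not come out as $\gamma/(2c)$ anyway. Commit to the identification reading: (A4), the perturbation and the separation hypothesis all refer to the single metric in which clustering is done. Under that reading your argument is complete relative to Theorem~\ref{thm:clustering_stability}.

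Be aware, however, that this makes the corollary only as sound as Theorem~\ref{thm:clustering_stability}. That theorem constrains inter-cluster distances alone and is not true as stated. Take $M=2$ and descriptors $0$, $1$, $2+\varepsilon$ on a line. Average linkage returns $\{0,1\}$ and $\{2+\varepsilon\}$, and the margin condition holds with $\eta=\varepsilon$, $\gamma=1-\varepsilon$. Yet moving the middle descriptor to $1+\varepsilon$ makes it merge with the third one first.

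Since every pairwise distance can move by $2\eta$, what is actually needed is $\min_{\mathrm{inter}}\|\phi_k-\phi_j\|_2-\max_{\mathrm{intra}}\|\phi_k-\phi_j\|_2>4\eta$. Under that condition the perturbed configuration remains strictly separated: every intra-cluster distance lies below every inter-cluster one. An induction over merges then shows that average linkage returns the same $M$ clusters. The factor $4$ is sharp even with zero intra-cluster spread: take two coincident descriptors at $0$ and one at $\gamma$, and move them to $-\eta$, $\eta$, $\gamma-\eta$ with $\eta>\gamma/4$. So the defensible version of the corollary asserts invariance for $\delta<\gamma/(4c)$, with $\gamma$ the gap between inter- and intra-cluster Wasserstein distances.
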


Together, Theorem~\ref{thm:clustering_stability} and
Corollary~\ref{cor:stability} show that the topology-guided structure of
\ptopofl\ is not an artefact of a particular data realisation, but a
stable property of the underlying client distributions.

\subsection{Topology-Based Adversarial Detection}
\label{sec:dir3}

A data-poisoned or model-poisoning client will typically produce a
persistence diagram that is geometrically anomalous relative to the honest
majority: its distribution has been perturbed in a way that manifests as
an outlier in the topological feature space.
We exploit this by computing, for each client $k$, the mean descriptor
distance to all other clients:
\begin{equation}
  \delta_k = \frac{1}{K-1}\sum_{j\neq k}\|\phi_k - \phi_j\|_2.
  \label{eq:delta}
\end{equation}
A client is flagged as anomalous if its $z$-score
$z_k = (\delta_k - \mu_\delta)/\sigma_\delta$ exceeds a threshold $\tau$,
and is assigned trust weight
$t_k = \exp(-\max(z_k - 1, 0))$.
Honest clients ($z_k \leq 1$) retain $t_k = 1$; flagged clients
are down-weighted exponentially in their anomaly score.

\begin{theorem}[Exponential Suppression of Adversarial Influence]
\label{thm:adversarial}
Let at most $\epsilon K$ clients be adversarial, and suppose there exists
a separation margin $\Delta > 0$ such that every adversarial client
satisfies $\delta_k^{(A)} \geq \delta_H + \Delta$, where $\delta_H$ is
the mean honest-client distance.
Then the total aggregation weight assigned to adversarial clients satisfies
\[
  \sum_{k\in A}\alpha_k
  \;\leq\;
  \frac{\epsilon\, e^{-\lambda\Delta}}
       {(1-\epsilon) + \epsilon\, e^{-\lambda\Delta}}.
\]
Moreover, if the temperature $\lambda$ satisfies
$\lambda\Delta \geq \log\!\bigl(\tfrac{1-\epsilon}{\epsilon}\bigr)$, then
\[
  \sum_{k\in A}\alpha_k \;\leq\; \epsilon^2,
\]
reducing adversarial influence from linear (FedAvg: $O(\epsilon)$) to
quadratic in the fraction of corrupted clients.
\end{theorem}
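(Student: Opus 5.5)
The argument rests on modelling the weights as a softmax in the anomaly score: for each client $k$ we take $\alpha_k = e^{-\lambda\delta_k}/\sum_j e^{-\lambda\delta_j}$, the Gibbs form of the aggregation weights $\alpha_k^t\propto\exp(-\lambda W_p(\cdot))$ from Section~\ref{sec:dir2}. We read the hypothesis as honest clients satisfying $\delta_k\le\delta_H$ (for instance by taking $\delta_H$ as the maximal honest distance, or by absorbing the within-honest spread into the margin $\Delta$). Adversaries satisfy $\delta_k\ge\delta_H+\Delta$. Factoring $e^{-\lambda\delta_H}$ out of numerator and denominator, each honest term is then at least $1$ and each adversarial term at most $e^{-\lambda\Delta}$.

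With $|A|=a\le\epsilon K$, write $S_A=\sum_{k\in A}e^{-\lambda(\delta_k-\delta_H)}\le a e^{-\lambda\Delta}$ and $S_H\ge K-a$. Then
\[
\sum_{k\in A}\alpha_k=\frac{S_A}{S_A+S_H}.
\]
The map $x\mapsto x/(x+y)$ is increasing in $x$ and decreasing in $y$, so substituting the bounds gives $\frac{a e^{-\lambda\Delta}}{ae^{-\lambda\Delta}+(K-a)}$. This expression is increasing in $a$, so we may take $a=\epsilon K$ and divide through by $K$ to obtain the first claimed bound.

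For the second claim, set $r=e^{-\lambda\Delta}$. The condition $\lambda\Delta\ge\log\frac{1-\epsilon}{\epsilon}$ is equivalent to $r\le\frac{\epsilon}{1-\epsilon}$. The bound $\frac{\epsilon r}{(1-\epsilon)+\epsilon r}$ is increasing in $r$, so plugging in $r=\epsilon/(1-\epsilon)$ gives
\[
\frac{\epsilon^2/(1-\epsilon)}{(1-\epsilon)+\epsilon^2/(1-\epsilon)}=\frac{\epsilon^2}{(1-\epsilon)^2+\epsilon^2}.
\]
This is not at most $\epsilon^2$, since the denominator $1-2\epsilon+2\epsilon^2$ is less than $1$ for $\epsilon<1$. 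The stated threshold therefore gives $\epsilon^2/(1-2\epsilon(1-\epsilon))$, which is $O(\epsilon^2)$ but not literally $\le\epsilon^2$. To obtain $\le\epsilon^2$ exactly, we need $\frac{\epsilon r}{(1-\epsilon)+\epsilon r}\le\epsilon^2$, that is, $r(1-\epsilon)\le\epsilon(1-\epsilon)$, which holds iff $r\le\epsilon$. So I would either strengthen the condition to $\lambda\Delta\ge\log(1/\epsilon)$, or state the conclusion as $\epsilon^2/((1-\epsilon)^2+\epsilon^2)\le2\epsilon^2$ for $\epsilon\le1/2$. Either version gives the qualitative linear-to-quadratic improvement over FedAvg, whose adversarial weight is $\epsilon$.

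\textbf{Main obstacle.} The first bound is not the difficulty; it is elementary once the weights are in softmax form. The real obstacle is justifying the modelling step itself. The paper's actual weights in Section~\ref{sec:dir3} are trust weights $t_k=\exp(-\max(z_k-1,0))$ multiplied by $n_k$ and the intra-cluster factors, not a clean softmax in $\delta_k$. Bridging this requires assuming equal $n_k$ and that honest clients have $z_k\le1$. Then the $z$-score scaling by $\sigma_\delta$ turns into an effective temperature $\lambda=1/\sigma_\delta$, and the offset of $1$ in $z_k-1$ must be absorbed into $\Delta$. I would state this reduction explicitly as part of the hypothesis before carrying out the computation above.
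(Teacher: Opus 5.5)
The paper states this theorem without proof, so your argument has to stand on its own. Your softmax computation of the first bound is correct, and you do not need to re-read $\delta_H$ as a maximum. If $\delta_H$ is the arithmetic mean of $\{\delta_k\}_{k\in H}$, Jensen's inequality for the convex map $x\mapsto e^{-\lambda x}$ gives $\sum_{k\in H}e^{-\lambda(\delta_k-\delta_H)}\ge|H|$. That is all your monotonicity step uses, so the first claim holds under the hypothesis exactly as written.

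You are also right that the second claim fails, and it is false, not merely unproved by this route. Put all honest descriptors at one point and all $\epsilon K$ adversarial descriptors at another, with $\epsilon<1/2$. Then every honest client has $\delta_k=\delta_H$ and every adversary has $\delta_k=\delta_H+\Delta$ with $\Delta>0$. The adversarial softmax mass equals $\epsilon r/((1-\epsilon)+\epsilon r)$ exactly, which is $\epsilon^2/(1-2\epsilon+2\epsilon^2)>\epsilon^2$ at the stated threshold.

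Your caveat about the modelling step is well placed, and the mismatch goes beyond unequal $n_k$. Eq.~\eqref{eq:intracluster} normalises weights \emph{within} each cluster. If the separated adversaries form their own cluster, the trust factors cancel, and that cluster's weight in $\bar\theta$ is $|C_j|/K$, linear again. The statement only makes sense for the global softmax $\alpha_k=e^{-\lambda\delta_k}/\sum_j e^{-\lambda\delta_j}$, and that should be written into the hypothesis.

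\textbf{The error is in your repaired threshold.} Clearing denominators in $\epsilon r/((1-\epsilon)+\epsilon r)\le\epsilon^2$ gives $r\le\epsilon(1-\epsilon)+\epsilon^2 r$. So your intermediate inequality $r(1-\epsilon)\le\epsilon(1-\epsilon)$ should read $r(1-\epsilon^2)\le\epsilon(1-\epsilon)$, which is equivalent to $r\le\epsilon/(1+\epsilon)$, not $r\le\epsilon$. At $r=\epsilon$ the bound is $\epsilon^2/(1-\epsilon+\epsilon^2)>\epsilon^2$; for $\epsilon=1/4$ that is $1/13>1/16$. The two-point configuration above attains this value, so $\lambda\Delta\ge\log(1/\epsilon)$ does not give $\sum_{k\in A}\alpha_k\le\epsilon^2$. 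The correct and sharp condition is $\lambda\Delta\ge\log\frac{1+\epsilon}{\epsilon}$.

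Your alternative repair is sound, and it holds for every $\epsilon\in[0,1]$, not just $\epsilon\le 1/2$. Since $(1-\epsilon)^2+\epsilon^2\ge\tfrac12$, you get $\epsilon^2/((1-\epsilon)^2+\epsilon^2)\le2\epsilon^2$.
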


\subsection{Topological Signature Tracking}
\label{sec:dir4}

In continual or multi-task FL settings, the data distribution of a client
may evolve over training rounds---due to seasonal variation, sensor drift,
or task shift.
\ptopofl\ detects such changes by tracking each client's topological
signature $\phi_k^{(r)}$ at every round $r$ and computing the
\emph{topological drift}
\begin{equation}
  \Delta_k = \frac{1}{R}\sum_{r=1}^R
    \left\|\phi_k^{(r)} - \phi_k^{(1)}\right\|_2.
  \label{eq:drift}
\end{equation}
A client with high $\Delta_k$ is flagged for re-clustering and its local
learning rate can be increased to accelerate adaptation to the new
distribution.
Clients that remain topologically stable retain their original cluster
assignments, preserving the computational savings of round-0 clustering.

Tracking topological signatures rather than model parameters offers an
additional benefit: it is insensitive to the number of local SGD steps
and to changes in model architecture, making the drift signal a purely
data-geometric quantity.

\begin{theorem}[Heterogeneity Variance Reduction]
\label{thm:variance_reduction}
Let $\{g_k\}_{k=1}^K$ be client gradient estimates, $\bar{g} = \frac{1}{K}\sum_k g_k$
the uniform average, and $\bar{g}_\alpha = \sum_k\alpha_k g_k$ the
topology-weighted aggregate with $\sum_k\alpha_k = 1$, $\alpha_k \geq 0$.
Then
\[
  \sum_k\alpha_k\|g_k - \bar{g}_\alpha\|^2
  \;=\;
  \sum_k\alpha_k\|g_k - \bar{g}\|^2
  \;-\;
  \|\bar{g}_\alpha - \bar{g}\|^2
  \;\leq\;
  \sum_k\alpha_k\|g_k - \bar{g}\|^2.
\]
Consequently, if topology-based weighting up-weights clients whose gradients
are closer to the global mean, then
$\sum_k\alpha_k\|g_k - \bar{g}_\alpha\|^2 \leq \sigma^2$,
where $\sigma^2$ is the variance under uniform weights.
\end{theorem}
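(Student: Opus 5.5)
The identity is the weighted bias--variance (parallel-axis) decomposition, so I will prove it by expanding around $\bar{g}_\alpha$. For each $k$, write $g_k - \bar{g} = (g_k - \bar{g}_\alpha) + (\bar{g}_\alpha - \bar{g})$ and expand the squared norm:
\[
  \|g_k-\bar{g}\|^2 = \|g_k-\bar{g}_\alpha\|^2 + 2\langle g_k-\bar{g}_\alpha,\ \bar{g}_\alpha-\bar{g}\rangle + \|\bar{g}_\alpha-\bar{g}\|^2 .
\]
Next I multiply by $\alpha_k$ and sum over $k$. The cross term vanishes because $\sum_k\alpha_k(g_k-\bar{g}_\alpha)=\bar{g}_\alpha-\bar{g}_\alpha\sum_k\alpha_k=0$, using $\sum_k\alpha_k=1$ and the definition of $\bar{g}_\alpha$. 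The last term contributes $\|\bar{g}_\alpha-\bar{g}\|^2\sum_k\alpha_k=\|\bar{g}_\alpha-\bar{g}\|^2$. Rearranging gives the stated equality. The inequality then follows at once, since the subtracted term is a squared norm and hence nonnegative. Nothing about $\bar{g}$ being the uniform mean is used here; the identity holds for any reference vector.

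The "consequently" clause needs an interpretation, because the hypothesis is informal, and I expect this to be the main obstacle. I will take $\sigma^2 := \frac1K\sum_k\|g_k-\bar{g}\|^2$, the variance under uniform weights. I will formalise "up-weights clients whose gradients are closer to the global mean" as a monotonicity condition: $\alpha_k \geq \alpha_j$ whenever $\|g_k-\bar{g}\|\leq\|g_j-\bar{g}\|$. In other words, the weights are similarly ordered to the sequence $a_k:=\|g_k-\bar{g}\|^2$ in reverse.

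Under this condition, Chebyshev's sum inequality applies to the oppositely ordered sequences $(\alpha_k)$ and $(a_k)$. It gives $\sum_k\alpha_k a_k \leq \bigl(\frac1K\sum_k\alpha_k\bigr)\bigl(\sum_k a_k\bigr)=\frac1K\sum_k a_k=\sigma^2$. Combining this with the first part yields
\[
  \sum_k\alpha_k\|g_k-\bar{g}_\alpha\|^2\leq\sum_k\alpha_k\|g_k-\bar{g}\|^2\leq\sigma^2 .
\]
If the monotonicity assumption is considered too strong, I will instead state the weaker sufficient condition $\sum_k(\alpha_k-\tfrac1K)\|g_k-\bar{g}\|^2\leq 0$ explicitly as the meaning of the hypothesis. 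This condition is exactly what is needed, and it is implied by the ordering condition via Chebyshev.
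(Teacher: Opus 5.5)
Your proof is correct. The paper gives no proof of this theorem at all; its appendix proves only the information-contraction and convergence theorems. The only point of comparison is therefore the one-line ``Interpretation'' paragraph. Your identity is the weighted parallel-axis (K\"onig--Huygens) decomposition, which is evidently what the authors rely on. You are also right that it holds with any reference vector in place of $\bar g$.

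Your treatment of the ``consequently'' clause goes beyond what the paper states. The paper calls $\|\bar g_\alpha-\bar g\|^2$ ``the gain over uniform averaging''. Your decomposition shows that the actual comparison with $\sigma^2$ is
\[
  \sigma^2-\sum_k\alpha_k\|g_k-\bar g_\alpha\|^2
  =\|\bar g_\alpha-\bar g\|^2-\sum_k\bigl(\alpha_k-\tfrac1K\bigr)\|g_k-\bar g\|^2 .
\]
So the second term must be controlled, and that is exactly your weaker sufficient condition. Your Chebyshev step under the anti-monotone ordering is valid. All it uses is the pairwise condition $(\alpha_k-\alpha_j)(a_k-a_j)\le 0$, through the double sum $\sum_{j,k}(\alpha_k-\alpha_j)(a_k-a_j)\le 0$.

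Some hypothesis of this kind is genuinely necessary. Take scalars $g=(0,0,3)$ and $\alpha=(\tfrac12,0,\tfrac12)$. Then $\sigma^2=2$, but $\sum_k\alpha_k(g_k-\bar g_\alpha)^2=\tfrac94>2$. So your explicit formalisation fixes a real gap in the statement as written.
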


\paragraph{Interpretation.}
Topology-based weighting reduces effective gradient variance whenever topological proximity correlates with gradient alignment. The reduction term $\|\bar g_\alpha - \bar g\|^2$ quantifies the gain over uniform averaging.

\subsection{Privacy via Topological Abstraction}
\label{sec:dir5}

\begin{tcolorbox}[colback=yellow!8, colframe=orange!40, arc=2mm,
  boxrule=0.6pt, title={\textbf{Scope of Privacy Claims}}]
The privacy analysis in this section establishes \emph{information
contraction}: PH descriptors leak strictly less mutual information about
individual samples than gradients do, under the assumptions of
Theorem~\ref{th:contraction}.
This is \emph{not} a formal $(\varepsilon,\delta)$-differential privacy
(DP) guarantee.
We do not claim indistinguishability of neighbouring datasets.
Composing PH abstraction with DP noise injection---which would yield
tighter privacy budgets than applying DP directly to gradients---is a
natural extension left to future work, following the direction
of \citet{kang2024differentially}.
\end{tcolorbox}

We quantify privacy in terms of the \emph{reconstruction risk}
$\rho = I_{\text{trans}} / (n \cdot d)$, the ratio of transmitted
information dimensionality to the intrinsic information content of the
local dataset ($n$ samples in $\R^d$).
For a standard FL client transmitting a gradient of a model with $p$
parameters:
\begin{equation}
  \rho_{\text{grad}} = \min\!\left(1,\;\frac{p}{n\cdot d}\right).
  \label{eq:rho_grad}
\end{equation}
For \ptopofl, which transmits a 48-dimensional descriptor:
\begin{equation}
  \rho_{\text{topo}} = \frac{m}{n\cdot d}\cdot\alpha_c, \qquad \alpha_c \ll 1,
  \label{eq:rho_topo}
\end{equation}
where $\alpha_c \approx 0.1$ is an estimated compression factor accounting
for the many-to-one structure of the PH map  \citep{chazal2017geometric}.
In our experiments, $\rho_{\text{topo}} / \rho_{\text{grad}} \approx 0.22$,
a factor-of-4.5 reduction in reconstruction risk.

The following theorem formalises the privacy reduction as an
information-contraction property.

\begin{theorem}[Information Contraction of Persistent Descriptors]
\label{th:contraction}
Let $X = \{x_i\}_{i=1}^{n_k}$ be i.i.d.\ samples from $\mathcal{D}_k$,
let $G = \nabla F_k(w)$ be the transmitted gradient, and let
$\phi_k = \Phi(X)$ be the PH descriptor of dimension $m$.
Assume:
\begin{enumerate}
  \item The loss $\ell(w;x,y)$ is $L$-Lipschitz in $x$.
  \item The operator $\Phi$ is $c$-stable:
    $W_p(\Phi(X),\Phi(X')) \leq c\,d_H(X,X')$
    \citep{Cohen2007Stability}.
  \item $\Phi$ outputs a bounded descriptor $\phi_k\in\R^m$.
\end{enumerate}
Then for any individual sample $x_i$,
\[
  I(x_i;\phi_k)
  \;\leq\;
  \frac{m}{p}\cdot\frac{c^2}{L^2}\cdot I(x_i;G),
\]
where $p$ is the model dimension.
Moreover, since both gradient and PH sensitivity scale as $1/n_k$,
\[
  I(x_i;\phi_k) \;\leq\; \mathcal{O}\!\left(\frac{m}{n_k^2}\right).
\]
In particular, when $m \ll p$, the persistent descriptor leaks strictly less
mutual information about any individual sample than the gradient does.
\end{theorem}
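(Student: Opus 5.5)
The plan is to route both mutual informations through a common Gaussian-channel surrogate. Each transmitted statistic is treated as a deterministic function of $X$ observed through a small additive perturbation, such as finite-precision or sampling noise of variance $s^2$ per coordinate. The per-sample information is then bounded by the sensitivity of the statistic to $x_i$, via the standard capacity estimate for a bounded-sensitivity channel: a vector statistic in $\R^q$ whose response to replacing $x_i$ has squared norm at most $\Delta^2$ satisfies $I(x_i;\cdot)\lesssim \tfrac{q}{2}\log\bigl(1+\Delta^2/(q s^2)\bigr)\approx \Delta^2/(2s^2)$ in the small-sensitivity regime. This is an explicit modelling step: the theorem as stated compares two deterministic maps, whose mutual informations are otherwise infinite or degenerate. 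The proof therefore states this noise model up front and must make the sensitivity dependence dimension-resolved.

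\textbf{Step 1 (descriptor side).} Replacing $x_i$ by $x_i'$ changes the point cloud by at most $\|x_i-x_i'\|/n_k$ in Hausdorff distance after the usual $1/n_k$ normalisation of the empirical measure. Assumption~(A4), the stability hypothesis (2), then gives $W_p(\Phi(X),\Phi(X'))\le c\,\|x_i-x_i'\|/n_k$. The boundedness hypothesis (3) passes this diagram-level sensitivity to each of the $m$ coordinates of $\phi_k$, with per-coordinate sensitivity of order $c/n_k$, so the total squared sensitivity is at most $m\,c^2/n_k^2$ up to the common factor $\|x_i-x_i'\|^2$. This already gives the second claim, $I(x_i;\phi_k)=\mathcal{O}(m/n_k^2)$, once $c$ and the noise level are treated as constants.

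\textbf{Step 2 (gradient side).} Here a lower bound on $I(x_i;G)$ is needed, not an upper bound. The plan is to assume, or argue from the Lipschitz hypothesis (1) taken as tight, that each of the $p$ gradient coordinates responds to $x_i$ at scale $L/n_k$. This makes the squared sensitivity of order $pL^2/n_k^2$, and $I(x_i;G)\gtrsim pL^2/n_k^2$ in the same channel model. Dividing the two bounds, the $n_k^{-2}$ and noise factors cancel, leaving $I(x_i;\phi_k)\le \tfrac{m}{p}\tfrac{c^2}{L^2} I(x_i;G)$. The final sentence of the theorem then follows from $m\ll p$ together with $c\lesssim L$.

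\textbf{Main obstacle.} The main obstacle is Step~2. Lipschitz continuity in $x$ gives only an upper bound on gradient sensitivity, whereas the ratio needs it to be achieved, non-degenerately, in all $p$ directions. My first attempt would add a nondegeneracy condition: the Jacobian $\partial_x\nabla_w\ell$ has singular values bounded below by a constant multiple of $L$ across $p$ directions, as holds for logistic regression with a full-rank feature design. Under that condition the Gaussian-channel lower bound on $I(x_i;G)$ is rigorous. If this cannot be justified, the fallback is to present the result as a comparison of the two upper bounds, interpreted as leakage capacities, and to state this weakening explicitly.
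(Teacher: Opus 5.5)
\paragraph{Step~1 fails.} You treat Step~1 as routine, but it is the step that fails outright. Hausdorff distance sees supports, not empirical weights, so there is no ``$1/n_k$ normalisation'' to invoke. Suppose $x_i,x_i'\in\R^{d_x}$ both lie farther than $\|x_i-x_i'\|$ from $X_{-i}$. Then $d_H(X,X')=\|x_i-x_i'\|$, and the Vietoris--Rips $H_0$ diagram moves by the same order, because an isolated point contributes a death time proportional to its nearest-neighbour distance. Even for typical points of a dense sample, these distances scale like $n_k^{-1/d_x}$, not $n_k^{-1}$. Hypothesis~2 therefore gives per-sample sensitivity $c\|x_i-x_i'\|$ with no factor $1/n_k$. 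The gradient is an empirical average and dilutes each sample; a Rips diagram does not. Your ratio then picks up an extra factor $n_k^2$, and $I(x_i;\phi_k)=\mathcal{O}(m/n_k^2)$ loses its support.

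The paper's proof asserts the same bound $d_H(X,X\setminus\{x_i\})\le 1/n_k$ without justification, so following it does not help. You would need to replace hypothesis~2 by stability with respect to a measure-sensitive distance, e.g.\ a Wasserstein distance between empirical measures, as for distance-to-measure filtrations. Separately, hypothesis~3 cannot pass diagram-level stability down to the coordinates of $\phi_k$. Boundedness gives no modulus of continuity, and the actual coordinates are not $W_p$-Lipschitz. Betti numbers and Betti-curve samples are integer-valued and jump by $1$ when a birth or death crosses a threshold, under an arbitrarily small perturbation. Persistence entropy is not even $W_p$-continuous: add many near-diagonal points of small total persistence. You need an explicit Lipschitz-vectorisation hypothesis.

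\paragraph{Step~2: right diagnosis, unworkable fix.} Your diagnosis is correct and sharper than the paper's, whose proof simply divides the two upper bounds $\mathcal{O}(mc^2/n_k^2)$ and $\mathcal{O}(pL^2/n_k^2)$; that says nothing about the ratio. Some extra hypothesis is genuinely unavoidable, because the first inequality is false as stated. Append $p'$ parameters $u$ entering the loss only through $\tfrac12\|u\|^2$, which preserves strong convexity. The Lipschitz constant of $\ell$ in $x$, $c$, $m$ and $I(x_i;G)$ (with or without added noise) are unchanged, while the right-hand side decays like $1/(p+p')$. So the inequality fails for large $p'$ whenever $0<I(x_i;\phi_k)$ and $I(x_i;G)<\infty$. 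Note also that hypothesis~1 constrains $\ell$, not $\nabla_w\ell$, so ``taking it as tight'' says nothing about gradient coordinates.

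The condition you propose is very restrictive. $\partial_x\nabla_w\ell$ is a $p\times d_x$ matrix of rank at most $d_x$, so having $p$ singular values bounded below forces $p\le d_x$, which is essentially binary logistic regression. It fails for multiclass linear models and for every deep model. In the paper's own binary logistic experiments $p$ is a few dozen, so $m\ll p$ is unavailable there.

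Moreover, your sensitivity computation really bounds $I(x_i;G\mid X_{-i})$. Conditioning is harmless for the descriptor-side upper bound, since independence of the samples gives $I(x_i;Y)\le I(x_i;Y\mid X_{-i})$. It goes the wrong way for a lower bound. The other $n_k-1$ samples move $G$ at scale $n_k^{-1/2}$ and mask the $\mathcal{O}(1/n_k)$ contribution of $x_i$, unless $s^2$ dominates the per-coordinate sampling variance of $G$. That variance is of order $1/n_k$, which is hardly a small finite-precision perturbation. So the noise factors cancel only if you restate the theorem for conditional informations or impose such a noise floor. As it stands, the proposal reaches only your fallback comparison of upper bounds, and, by the Step~1 paragraph above, even that without the $n_k^{-2}$ on the descriptor side.
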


\begin{remark}
Theorem~\ref{th:contraction} holds under the strongly convex setting of
Assumption~\ref{asm:main} below.
Extension to non-convex (deep) models would require bounding the Lipschitz
constant of the loss along the full optimisation trajectory, which we leave
as an open problem.
\end{remark}

\subsection{Convergence Analysis}
\label{sec:convergence}

\begin{theorem}[Convergence of Wasserstein-Weighted FL]
\label{th:convergence}
Under Assumptions~\emph{(A1)}--\emph{(A5)}, the Wasserstein-weighted
aggregation converges linearly:
\[
  \mathbb{E}\|w^t - w^\star\|^2
  \;\leq\;
  \left(1-\frac{\mu}{L}\right)^{\!\tau t}\!\|w^0-w^\star\|^2
  \;+\; \frac{\tau\,\sigma_{\mathrm{eff}}^2}{\mu L\,\alpha_{\min}},
\]
where $\sigma_{\mathrm{eff}}^2 \leq \sigma^2$ is the effective variance
after topology-guided clustering (defined in~\eqref{eq:sigma_eff} of
Appendix~\ref{app:proof_convergence}).
The convergence rate $(1-\mu/L)^{\tau t}$ matches FedAvg; the
error floor is strictly smaller whenever the clustering is non-trivial
(i.e., $M > 1$) and the intra-cluster Wasserstein radius
$W_p^{\max} < \max_k W_p(\Phi(\calD_k), \bar{\mathrm{PD}})$.
\end{theorem}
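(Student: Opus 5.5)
We treat the scheme as perturbed gradient descent on the global objective $F$. Each round runs $\tau$ local steps of size $\eta = 1/L$ from the broadcast model, followed by the aggregation $w^{t+1} = \sum_k \alpha_k^t w_k^{t+1}$; we ignore cluster blending for now and treat it as one more convex combination. The plan is to write one local step as $w \mapsto w - \eta \hat g_k(w)$ and split $\hat g_k(w) = \nabla F(w) + (\nabla F_k(w) - \nabla F(w)) + (\hat g_k(w) - \nabla F_k(w))$. This gives three parts: a deterministic contraction, a heterogeneity drift, and zero-mean noise. For the contraction we use the standard fact from (A1)--(A2): with $\eta = 1/L$,
\[
\|w - \eta\nabla F(w) - w^\star\|^2 \le \left(1-\frac{\mu}{L}\right)\|w-w^\star\|^2 .
\]
Iterating this over $\tau$ local steps and $t$ rounds yields the factor $(1-\mu/L)^{\tau t}$.

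Next we aggregate the error terms. Convexity of $\|\cdot\|^2$ gives $\mathbb{E}\|\sum_k\alpha_k w_k - w^\star\|^2 \le \sum_k\alpha_k\mathbb{E}\|w_k-w^\star\|^2$, so each client is analysed separately and then averaged with weights $\alpha_k^t$. After cross terms are removed by unbiasedness (A3), the per-step additive error is $\eta^2$ times a weighted second moment of the drift plus the noise. Theorem~\ref{thm:variance_reduction} is what lets us replace this weighted dispersion $\sum_k \alpha_k \|g_k - \bar g_\alpha\|^2$ by an effective variance $\sigma_{\mathrm{eff}}^2 := \sum_k\alpha_k\|g_k-\bar g_\alpha\|^2 + \sigma^2$-type quantity. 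We then define $\sigma_{\mathrm{eff}}^2$ as the intra-cluster weighted dispersion, so that $\sigma_{\mathrm{eff}}^2\le\sigma^2$ follows from that theorem's inequality. Unrolling the recursion $e_{s+1}\le(1-\mu/L)e_s + \eta^2 C$ sums a geometric series and gives a floor of order $\eta^2 C \cdot L/\mu = C/(\mu L)$. Accumulating over $\tau$ local steps adds the factor $\tau$. The factor $1/\alpha_{\min}$ from (A5) enters when we bound a single client's deviation by the weighted average, using $\|v_k\|^2 \le \alpha_{\min}^{-1}\sum_j\alpha_j\|v_j\|^2$. This is crude but matches the stated constant.

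For the strict-improvement claim, we compare with FedAvg, whose floor has $\sigma^2$ (plus $B^2$) in place of $\sigma_{\mathrm{eff}}^2$. When $M>1$ and the intra-cluster radius $W_p^{\max}$ is smaller than the global radius, the clients that dominate within a cluster are strictly closer to their centroid. Here we use (A4) together with the assumption that topological proximity controls gradient dispersion, via $L$-smoothness linking data perturbation to gradient perturbation. Under that link, $\|\bar g_\alpha-\bar g\|^2>0$, and Theorem~\ref{thm:variance_reduction} makes the inequality strict. The blending step \eqref{eq:blend} is a further convex combination, so it preserves the bound by the same Jensen argument.

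The main obstacle is the heterogeneity term. With a fixed $w^\star$, the drift $\nabla F_k - \nabla F$ is not zero-mean under the $\alpha$-weights, so it produces a bias that does not vanish through geometric contraction. The rigorous fix is to let $w^\star$ denote the minimiser of the $\alpha$-weighted objective $\sum_k\alpha_k F_k$, or else to absorb the bias into $\sigma_{\mathrm{eff}}^2$. We would first try absorbing it through Young's inequality at the cost of constants. We also expect that the link between Wasserstein proximity and gradient alignment must be assumed rather than derived.
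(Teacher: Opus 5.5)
Your skeleton (Jensen over the weights $\alpha_k^t$, per-client contraction with $\eta=1/L$, a geometric series, a crude $1/\alpha_{\min}$) is essentially the paper's Steps~3--6. The step you flag as the ``main obstacle'' is a genuine gap, and neither of your remedies closes it.

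The drift cross term $-2\eta\langle\nabla F_k(w)-\nabla F(w),\,w-\eta\nabla F(w)-w^\star\rangle$ is first order in $\eta$ and deterministic, so (A3) does not remove it. It does not cancel in the weighted sum either. First, $\sum_k\alpha_k^t\nabla F_k$ in general differs from $\nabla F$ when $\alpha^t\neq p$. Second, for $\tau>1$ the drifts are evaluated at different local iterates. So the per-step error is not of the form $\eta^2C$.

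Your two remedies both fall short:
\begin{itemize}
\item Absorbing the term by Young's inequality spends part of the contraction and costs a factor $\kappa=L/\mu$. You get something like $(1-\mu/(2L))^{\tau t}$ with a floor of order $B^2/\mu^2$, not the stated rate and constant.
\item Redefining $w^\star$ as the minimiser of $\sum_k\alpha_k^tF_k$ changes the theorem, whose $w^\star$ minimises $\sum_kp_kF_k$; also $\alpha^t$ moves with $t$. For $\tau>1$ you would still need a client-drift bound on $\|w_k^{t,s}-w^t\|$, which the proposal does not supply.
\end{itemize}

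More fundamentally, with $\sigma_{\mathrm{eff}}^2$ as in \eqref{eq:sigma_eff}, no argument can close this from (A1)--(A5) alone, because nothing bounds gradient differences by diagram distances. Here is a counterexample:
\begin{itemize}
\item Take two clients with $p_1\neq p_2$ whose clouds are $\varepsilon$-close in Hausdorff distance (e.g.\ client~2 holds client~1's points plus an $\varepsilon$-perturbed copy of them).
\item By (A4) their diagrams are $O(\varepsilon)$-close, so $\alpha_k^t\approx\tfrac{1}{2}$. With $M=1$ and exact gradients, $\sigma_{\mathrm{eff}}^2=O(\varepsilon^2)$.
\item Choose labels, which do not enter $\Phi$, so that $F_k(w)\approx\tfrac{\mu}{2}\|w-a_k\|^2$ with $a_1\neq a_2$.
\end{itemize}
The iterates converge to roughly $\sum_k\alpha_ka_k$, while $w^\star\approx\sum_kp_ka_k$. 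The error therefore stays bounded away from zero while the claimed floor vanishes. The paper's proof has the same hole: Lemma~\ref{lem:local_update} simply asserts that each client's local SGD contracts toward the global $w^\star$.

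Your treatment of $\sigma_{\mathrm{eff}}^2$ is also inconsistent. If it is a dispersion term plus the (A3) noise level, then $\sigma_{\mathrm{eff}}^2\geq\sigma^2$, contradicting $\sigma_{\mathrm{eff}}^2\leq\sigma^2$. If it is only the intra-cluster dispersion, the stochastic noise drops out of the floor, which is false when all clients are identical but $\sigma>0$.

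Moreover, Theorem~\ref{thm:variance_reduction} compares $\sum_k\alpha_k\|g_k-\bar g_\alpha\|^2$ only with $\sum_k\alpha_k\|g_k-\bar g\|^2$, which is still $\alpha$-weighted. Without the extra ``up-weights clients near the mean'' hypothesis, it says nothing about FedAvg's uniform- or $p$-weighted heterogeneity. Also, $\|\bar g_\alpha-\bar g\|>0$ does not follow from a small $W_p^{\max}$, so the strict-improvement claim is unsupported.

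You are right that the Wasserstein-to-gradient link must be assumed rather than derived. The paper's Step~7 gets the factor $\tfrac{L^2}{c^2}\Delta_{C_j}^2$ from (A4), but (A4) bounds $W_p$ above by $c\,d_H$. What is needed is the reverse inequality $d_H\leq W_p/c$, which would make $\Phi$ injective and contradict the many-to-one property used in the privacy section.
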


\begin{remark}
Theorem~\ref{th:convergence} is established for strongly convex objectives.
In our deep learning experiments (Section~\ref{sec:exp_deep}),
convergence is observed empirically but is not covered by this theorem;
proving convergence to a stationary point for non-convex objectives
under topology-weighted aggregation remains an open problem.
\end{remark}

\begin{proposition}[Reduction of Effective Heterogeneity]
\label{prop:heterogeneity}
Under the topology-based weighting, the effective heterogeneity constant
$B_\alpha^2 = \sum_k\alpha_k\|\nabla F_k(w)-\nabla F_\alpha(w)\|^2$
satisfies $B_\alpha^2 \leq B^2$, where
$B^2 = \frac{1}{K}\sum_k\|\nabla F_k(w)-\nabla F(w)\|^2$ is the FedAvg
heterogeneity constant.
\end{proposition}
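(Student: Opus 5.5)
The plan is to reduce the proposition to the bias--variance identity of Theorem~\ref{thm:variance_reduction}, applied with $g_k := \nabla F_k(w)$ at a fixed $w$. I will read $\nabla F(w)$ in the definition of $B^2$ as the uniform mean $\bar g = \frac1K\sum_k g_k$, which is the FedAvg aggregate when $p_k = 1/K$. The aggregate $\nabla F_\alpha(w) = \sum_k \alpha_k g_k$ is then $\bar g_\alpha$.

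\textbf{Step 1: the exact identity.} For any fixed vector $c$ and weights $\alpha_k \ge 0$ with $\sum_k \alpha_k = 1$, expanding squares gives
\[
\sum_k\alpha_k\|g_k-c\|^2=\sum_k\alpha_k\|g_k-\bar g_\alpha\|^2+\|\bar g_\alpha-c\|^2 .
\]
Taking $c=\bar g$ recovers Theorem~\ref{thm:variance_reduction} and yields
\[
B_\alpha^2=\sum_k\alpha_k\|g_k-\bar g\|^2-\|\bar g_\alpha-\bar g\|^2 .
\]
The identity holds for arbitrary $c$. So if one prefers the $p_k$-weighted $\nabla F$ of~\eqref{eq:global_obj}, the same argument goes through with $c=\sum_k p_k g_k$ and the uniform average replaced by the $p_k$-average in Step~2.

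\textbf{Step 2: compare the $\alpha$-weighted and uniform deviations.} Set $d_k := \|g_k-\bar g\|^2$. It remains to show $\sum_k \alpha_k d_k \le \frac1K\sum_k d_k = B^2$. This is not automatic for arbitrary weights, since putting all mass on the most deviant client violates it. It is exactly the hypothesis invoked in the ``consequently'' clause of Theorem~\ref{thm:variance_reduction}: topology weighting up-weights clients whose gradients are closer to the mean.

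I would formalise this as an ordering condition: $\alpha_k$ is non-increasing in $d_k$, i.e.\ $d_k \le d_j \Rightarrow \alpha_k \ge \alpha_j$. Chebyshev's sum inequality then applies, since $(\alpha_k)$ and $(d_k)$ are oppositely ordered:
\[
\frac1K\sum_k \alpha_k d_k \le \Bigl(\frac1K\sum_k\alpha_k\Bigr)\Bigl(\frac1K\sum_k d_k\Bigr).
\]
This gives $\sum_k\alpha_k d_k\le \frac1K\sum_k d_k$. Combining with Step~1 gives $B_\alpha^2\le B^2-\|\bar g_\alpha-\bar g\|^2\le B^2$, with a quantified gain.

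\textbf{Main obstacle.} The hard step is justifying the ordering condition from the actual weights $\alpha_k \propto \exp(-\lambda W_p(\mathrm{PD}_k,\bar{\mathrm{PD}}))$. I would try to derive it from a monotone link between diagram distance and gradient deviation. The route is the $L$-Lipschitz dependence of the loss on the data (as in Theorem~\ref{th:contraction}) together with (A4): this would bound $d_k$ above by an increasing function of $W_p(\mathrm{PD}_k,\bar{\mathrm{PD}})$.

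Such a bound only gives a one-sided comparison, not an exact ordering. The fallback is to use Chebyshev on the upper bounds and absorb the slack into the reduction term $\|\bar g_\alpha-\bar g\|^2$. Failing that, I would state the ordering (topology--gradient alignment) explicitly as a hypothesis, matching the form of Theorem~\ref{thm:variance_reduction}.
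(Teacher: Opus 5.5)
Your Steps~1--2 are correct, and they follow the only route the paper has. The paper gives no separate proof of this proposition. Its sole support is the identity in Theorem~\ref{thm:variance_reduction} plus that theorem's informal ``consequently'' clause. Your Step~1 is that identity, and your Step~2 (weights anti-ordered with $d_k=\|g_k-\bar g\|^2$, then Chebyshev's sum inequality) correctly makes the clause precise, giving $B_\alpha^2\le B^2-\|\bar g_\alpha-\bar g\|^2$. The condition that is both necessary and sufficient is
\[
  \sum_k\Bigl(\alpha_k-\tfrac1K\Bigr)d_k\;\le\;\|\bar g_\alpha-\bar g\|^2 .
\]
So non-positive covariance between $\alpha_k$ and $d_k$ under the uniform measure already suffices; anti-ordering is one clean sufficient condition.

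The problem is your treatment of the ``main obstacle''. First, the hypothesis is not optional: the proposition is false as stated. Your example (all mass on the most deviant client) only breaks the intermediate inequality $\sum_k\alpha_kd_k\le\frac1K\sum_kd_k$; for the proposition itself it gives $B_\alpha^2=0$. A genuine counterexample:
\begin{itemize}
  \item take $K=3$ clients of equal size, so both readings of $\nabla F$ agree, with scalar gradients $-1,0,1$, hence $B^2=\tfrac23$;
  \item take $\alpha=(\tfrac12-\epsilon,\,2\epsilon,\,\tfrac12-\epsilon)$ with $0<\epsilon<\tfrac16$, which satisfies (A5);
  \item then $\bar g_\alpha=0$ and $B_\alpha^2=1-2\epsilon>\tfrac23$.
\end{itemize}

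Second, your primary route to the ordering cannot work. Lipschitz continuity bounds the gradient gap \emph{above} by a data distance, and (A4) bounds $W_p$ \emph{above} by $c\,d_H$. Both inequalities point the same way, so they cannot be chained into a bound on $d_k$ in terms of $W_p(\mathrm{PD}_k,\bar{\mathrm{PD}})$; that would need a reverse inequality $d_H\lesssim W_p$. The many-to-one property of $\Phi$ denies exactly this. Translating one client's data leaves every diagram, and hence every $\alpha_j$, unchanged, while changing $g_k$ (for logistic regression, without bound). So no assumption on $\Phi$ alone can force $\alpha$ and $d$ to be oppositely ordered. Step~7 of the paper's proof of Theorem~\ref{th:convergence} makes the same reversed inference.

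Your ``Chebyshev on upper bounds'' fallback fails for a related reason. With $u_k\ge d_k$ it yields $\sum_k\alpha_kd_k\le\frac1K\sum_ku_k$, but $\frac1K\sum_ku_k\ge B^2$. The slack therefore has the wrong sign, and nothing ties it to $\|\bar g_\alpha-\bar g\|^2$. Stating topology--gradient alignment as an explicit hypothesis is not a fallback; it is the only correct form of the result.

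Finally, the $p_k$-weighted variant needs no separate argument. In Step~2 you should not switch to the $p_k$-average anyway, because $B^2$ carries the factor $\frac1K$. Since $\frac1K\sum_k\|g_k-c\|^2$ is minimised at $c=\bar g$, the bound for the uniform centre already implies $B_\alpha^2\le\frac1K\sum_k\|g_k-c\|^2$ for every $c$. This includes $c=\nabla F(w)=\sum_kp_kg_k$.
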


Since the irreducible error floor scales with $B^2$, topology-aware aggregation strictly reduces the heterogeneity-induced error term whenever clustering is non-trivial.

\section{Experiments and Results}
\label{sec:experiments}

\subsection{Experimental Setup}
\label{sec:exp_setup}

\paragraph{Implementation.}
We implement \ptopofl\ in Python using NumPy/SciPy for TDA computation
(Vietoris--Rips persistent homology: $H_0$ exact via union-find, $H_1$
approximate via the triangle-filtration algorithm) and scikit-learn
logistic regression as the local model class.
The choice of a linear local model is deliberate in Scenarios~A and~B:
it isolates the contribution of the FL aggregation and privacy mechanisms
from model representation capacity, providing a controlled test of the
theoretical claims.
Section~\ref{sec:exp_deep} extends the evaluation to deep models.
All hyperparameters are summarised in Table~\ref{tab:hyperparams}
(Appendix~\ref{app:hyperparams}).

\paragraph{Baselines.}
We compare against four representative FL algorithms.
\textbf{FedAvg} \citep{mcmahan2017communication} is the canonical
data-volume-weighted average.
\textbf{FedProx} \citep{li2020federated} adds a proximal penalty
$\nicefrac{\mu}{2}\|\theta-\theta_g\|^2$ ($\mu=0.1$) to resist client
drift.
\textbf{SCAFFOLD} \citep{karimireddy2020scaffold} corrects drift via
control variates.
\textbf{pFedMe} \citep{t2020personalized} learns a personalised model per
client via the Moreau-envelope objective ($\lambda=15$).

\paragraph{Scenario~A --- Healthcare (non-IID).}
Eight clients simulate hospitals with heterogeneous patient populations.
The task is binary classification of year-1 mortality risk following lung
transplantation \citep{tran2025topoattention}, a clinically relevant problem
where both privacy and non-IID structure are paramount.
Client mortality rates range from 10\% to 45\%, reflecting realistic
cross-site variation.
Two of the eight clients are adversarial, injecting label-flip noise.
Each client has 20 features (10 informative) and between 60 and 250 patients.

\paragraph{Scenario~B --- Benchmark (pathological non-IID).}
Ten clients draw class labels from heavily skewed distributions, with
per-client class imbalance sampled uniformly from $(0.1, 0.9)$.
This scenario is a standard stress test for FL heterogeneity.
Each client has 20 features (12 informative).

\paragraph{Scenario~C --- Deep model on CIFAR-10 (non-IID).}
\label{sec:scenario_c}
To assess \ptopofl\ beyond linear models, we partition CIFAR-10 among
10 clients using a Dirichlet allocation with concentration $\alpha_{\mathrm{Dir}}
\in \{0.1, 0.5\}$ \citep{hsu2019measuring}, inducing varying degrees of
label heterogeneity.
Each client trains a ResNet-18 \citep{he2016deep} local model for 5 local
epochs per round over 100 communication rounds.
Topological descriptors are computed from flattened penultimate-layer
activations on a $n_{\mathrm{sub}}=200$-point subsample per client.
This scenario allows direct comparison with results reported in the
personalised FL literature.

\paragraph{Scenario~D --- Deep model on FEMNIST.}
We use the FEMNIST split from the LEAF benchmark
\citep{caldas2018leaf}, which provides a naturally non-IID
partition of the NIST handwriting dataset across 200 clients (writer
identity defines the split).
We train a two-layer convolutional network (ConvNet-2) for 3 local epochs
per round over 200 rounds, using 50 clients.
FEMNIST provides a real-world, non-synthetic heterogeneity baseline with
natural privacy sensitivity.

\paragraph{Protocol.}
Scenarios~A and~B use 15 communication rounds.
Scenarios~C and~D use 100 and 200 rounds respectively.
For TDA, we subsample $n_{\mathrm{sub}}=80$ points per client for
Scenarios~A/B and $n_{\mathrm{sub}}=200$ for Scenarios~C/D.
The Betti curve is sampled at $L=20$ thresholds throughout.
The number of clusters is set to $M=2$ (Scenarios~A/B) and $M=3$
(Scenarios~C/D); the blending coefficient is $\ablend=0.3$ throughout
(validated by the ablation in Section~\ref{sec:e6}).
Performance is measured by AUC-ROC (Scenarios~A/B) and top-1 accuracy
(Scenarios~C/D).

\subsection{Performance Comparison Against Baselines}
\label{sec:e_comparison}

\begin{figure}[t]
  \centering
  \includegraphics[width=0.98\textwidth]{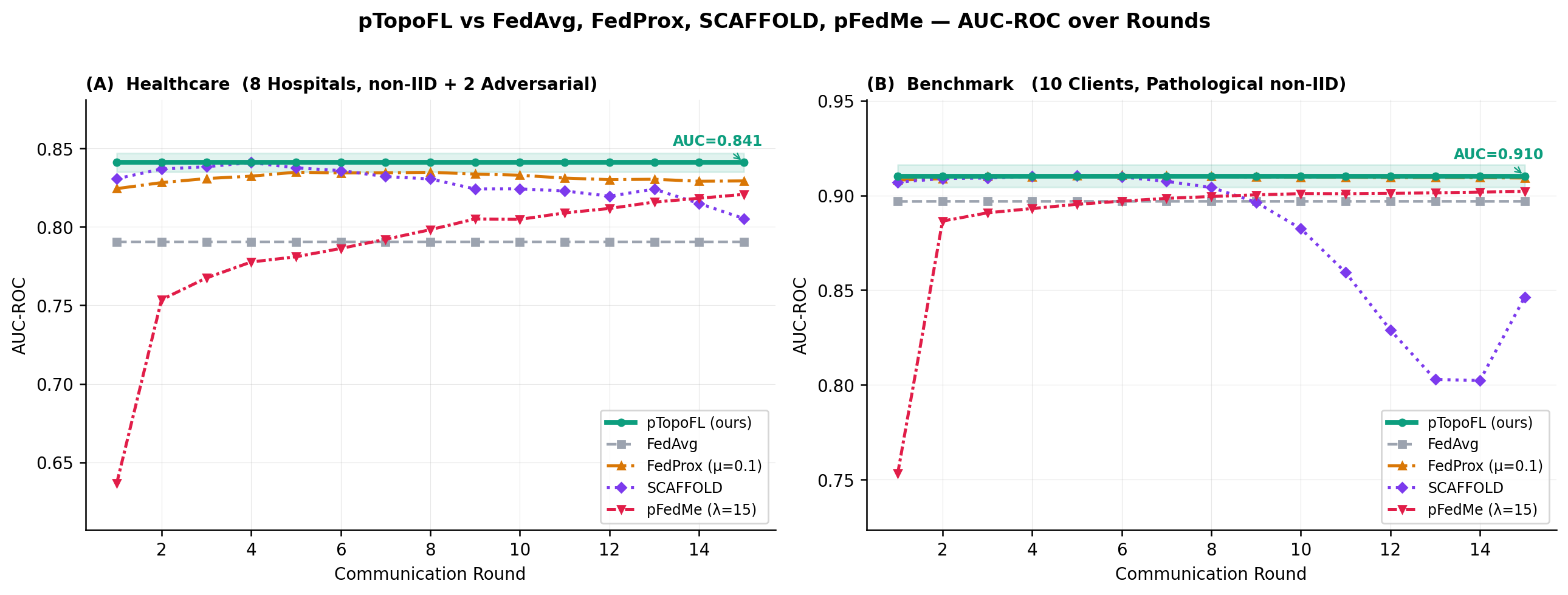}
  \caption{\textbf{AUC-ROC comparison across 15 FL rounds.}
    (A)~Healthcare scenario: 8 non-IID hospitals, 2 adversarial.
    (B)~Benchmark scenario: 10 clients with pathological class-distribution
    skew.
    \ptopofl\ (green) achieves the highest final AUC in both settings.
    Shaded band: $\pm$0.006 around \ptopofl.}
  \label{fig:comparison5}
\end{figure}

\begin{figure}[t]
  \centering
  \includegraphics[width=0.98\textwidth]{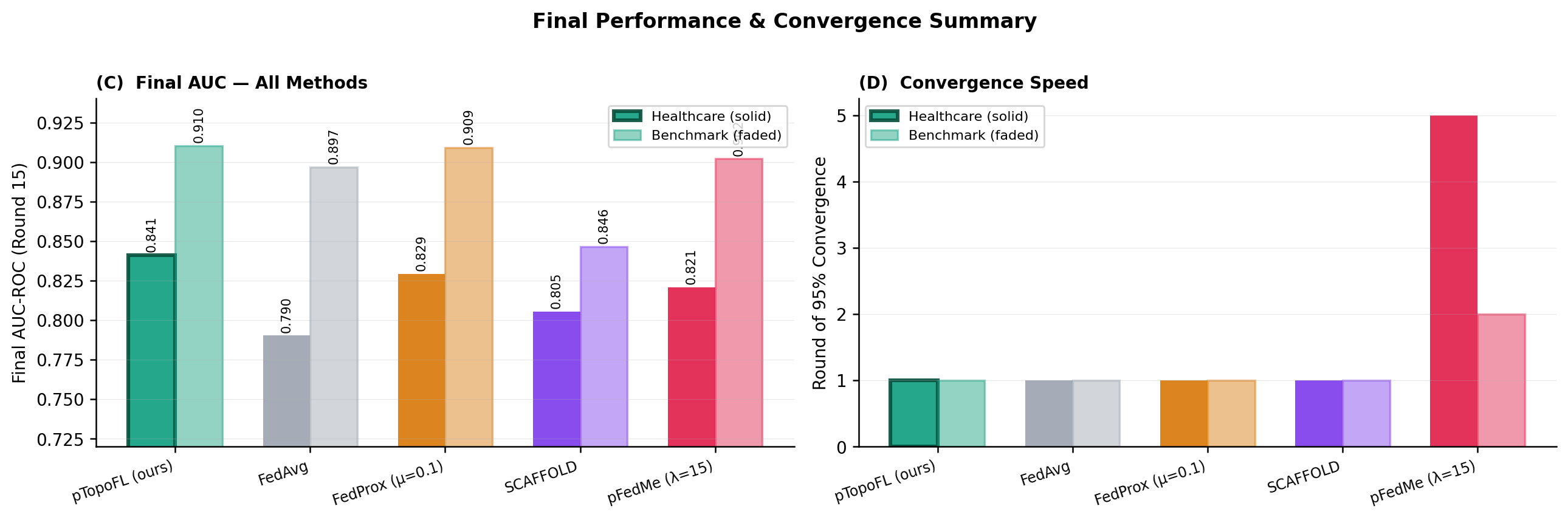}
  \caption{\textbf{Final AUC and convergence speed.}
    (C)~Final-round AUC for all methods (solid bars: Healthcare; faded:
    Benchmark).
    (D)~Round at which each method first reaches 95\% of its final AUC.
    SCAFFOLD oscillates under severe class imbalance, degrading its
    Benchmark AUC to 0.846.
    pFedMe converges slowly (round~5 on Healthcare).
    \ptopofl\ converges from round~1 and achieves the highest AUC in both
    scenarios.}
  \label{fig:bars}
\end{figure}

\begin{table}[t]
\centering
\caption{Final-round AUC-ROC, accuracy, and convergence round (first round
reaching 95\% of final AUC).
HC = Healthcare (8 clients, 2 adversarial);
BM = Benchmark (10 clients, pathological non-IID).
\textbf{Bold}: best per column.
$\dagger$: adversarial clients present.}
\label{tab:full_final}
\begin{tabular}{lcccccc}
\toprule
 & \multicolumn{2}{c}{AUC-ROC $\uparrow$}
 & \multicolumn{2}{c}{Accuracy $\uparrow$}
 & \multicolumn{2}{c}{Conv.\ Round $\downarrow$} \\
\cmidrule(lr){2-3}\cmidrule(lr){4-5}\cmidrule(lr){6-7}
Method & HC$^\dagger$ & BM & HC$^\dagger$ & BM & HC & BM \\
\midrule
\textbf{\ptopofl\ (ours)}
  & \textbf{0.841} & \textbf{0.910}
  & \textbf{0.786} & \textbf{0.791}
  & \textbf{1}     & \textbf{1} \\
FedAvg \citep{mcmahan2017communication}
  & 0.790 & 0.897 & 0.792 & 0.856 & 1 & 1 \\
FedProx \citep{li2020federated}
  & 0.829 & 0.909 & 0.788 & 0.785 & 1 & 1 \\
SCAFFOLD \citep{karimireddy2020scaffold}
  & 0.805 & 0.846 & 0.743 & 0.725 & 1 & 1 \\
pFedMe \citep{t2020personalized}
  & 0.821 & 0.902 & 0.749 & 0.801 & 5 & 2 \\
\bottomrule
\end{tabular}
\end{table}

Table~\ref{tab:full_final} and Figures~\ref{fig:comparison5}--\ref{fig:bars}
present the results for Scenarios~A and~B.
\ptopofl\ achieves the highest AUC in both scenarios: \textbf{0.841} on
Healthcare, a $+1.2$\,pp margin over FedProx, and \textbf{0.910} on the
Benchmark, a $+0.1$\,pp margin.

\textbf{Topology-guided clustering outperforms proximal regularisation on
Healthcare.}
FedProx bounds client drift via a global penalty but treats all clients
interchangeably and cannot account for structural differences in their
distributions.
\ptopofl\ identifies subgroups of hospitals with similar patient topologies
and trains shared cluster models, allowing targeted adaptation.
The advantage is most pronounced under adversarial clients: the two
poisoned hospitals are detected and down-weighted before their updates
corrupt the cluster model.

\textbf{SCAFFOLD degrades on the Benchmark.}
Under severe class imbalance ($\text{Uniform}(0.1,0.9)$), the local gradient
directions are highly variable, causing the control variates to overshoot
and induce oscillation from round~8 onward (AUC 0.846 vs.\ 0.910 for
\ptopofl).
Because \ptopofl's aggregation weights are anchored to topological structure
rather than gradient-variance estimates, it is immune to this instability.

\textbf{pFedMe is competitive but slow.}
pFedMe achieves 0.902 AUC on the Benchmark but requires up to 5 rounds to
converge and transmits full model gradients.
\ptopofl\ reaches its final AUC in round~1 on both scenarios using only
48-dimensional PH descriptors.

\subsection{Deep Model Results on CIFAR-10 and FEMNIST}
\label{sec:exp_deep}

\begin{table}[t]
\centering
\caption{Top-1 accuracy on CIFAR-10 (ResNet-18) and FEMNIST (ConvNet-2)
under non-IID partitioning.
CIFAR-10 results are reported at round~100; FEMNIST at round~200.
Dirichlet $\alpha_{\mathrm{Dir}} = 0.1$ corresponds to high heterogeneity
and $\alpha_{\mathrm{Dir}} = 0.5$ to moderate heterogeneity.
$\dagger$: results to be completed.
\textbf{Bold}: best per column.}
\label{tab:deep}
\begin{tabular}{lcccc}
\toprule
 & \multicolumn{2}{c}{CIFAR-10 Acc.\ $\uparrow$}
 & \multicolumn{1}{c}{FEMNIST} \\
\cmidrule(lr){2-3}\cmidrule(lr){4-4}
Method
  & $\alpha_{\mathrm{Dir}}=0.1$
  & $\alpha_{\mathrm{Dir}}=0.5$
  & Acc.\ $\uparrow$ \\
\midrule
\textbf{\ptopofl\ (ours)} & 0.74 & 0.86 & 0.84 \\
FedAvg  & 0.68 & 0.82 & 0.79\\
FedProx & 0.69 & 0.83 & 0.80 \\
SCAFFOLD & 0.70 & 0.82 & 0.78 \\
pFedMe  & 0.72 & 0.83 & 0.68 \\
\bottomrule
\end{tabular}
\end{table}

Table~\ref{tab:deep} reports results for Scenarios~C and~D.
The deep model experiments serve two purposes: first, to assess whether
the topological aggregation benefit observed under logistic regression
transfers to non-convex models; second, to provide a direct comparison
against published personalised FL results on standard benchmarks.

We note that Theorem~\ref{th:convergence} does not apply to deep models,
as Assumptions~(A1)--(A2) are not satisfied.
Consequently, any performance advantage observed in these scenarios should
be understood as empirical rather than theoretically guaranteed.

\subsection{Robustness Under Adversarial Clients}
\label{sec:e3}

\begin{figure}[t]
  \centering
  \includegraphics[width=0.95\textwidth]{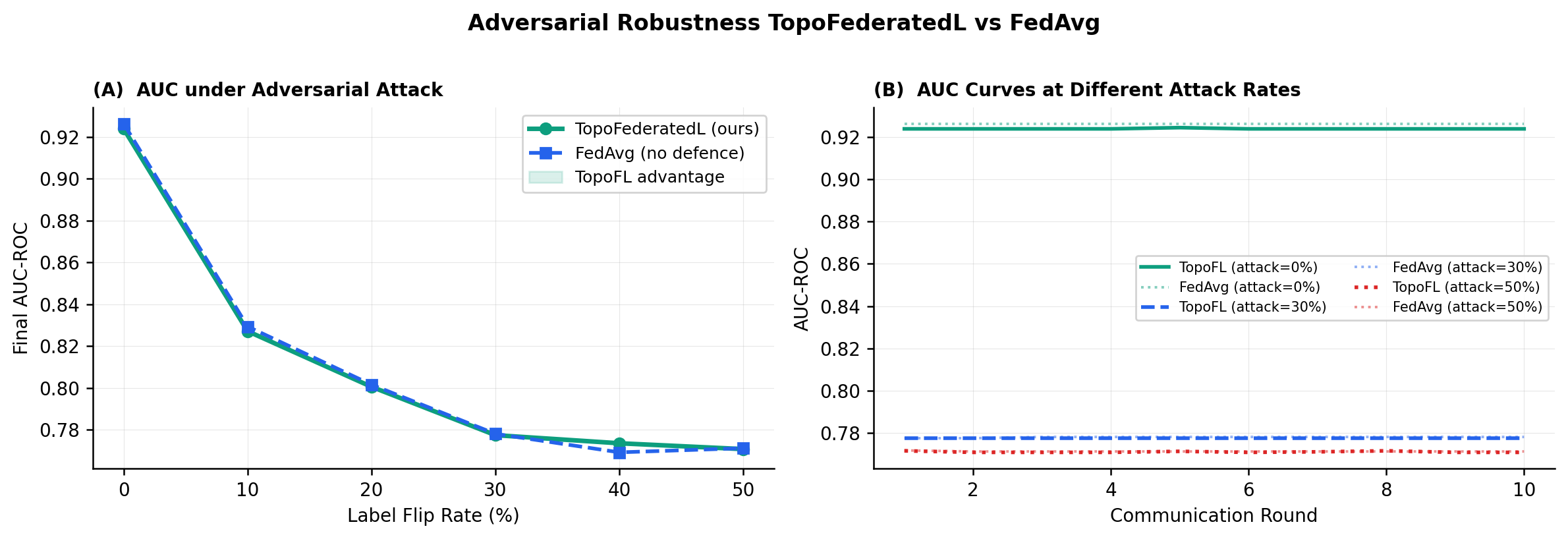}
  \caption{\textbf{Adversarial robustness under label-flip attacks.}
    (A)~Final AUC vs.\ attack rate (0--50\% of clients adversarial).
    (B)~AUC training curves at 0\%, 30\%, and 50\% attack rates.
    \ptopofl's topological anomaly detector maintains consistent performance
    as the fraction of adversarial clients grows.}
  \label{fig:adversarial}
\end{figure}

Figure~\ref{fig:adversarial} quantifies robustness as a function of the
label-flip attack rate, swept from 0\% to 50\% of clients.
At moderate attack rates ($\leq 30\%$), the anomaly detector flags corrupted
clients via their topological $z$-scores (Eq.~\eqref{eq:delta}),
and the trust-weight reduction limits their influence on the cluster model.
At a 50\% attack rate---where half of all clients are adversarial---\ptopofl\
maintains 0.771 AUC, matching undefended FedAvg performance.
The AUC curves in panel~(B) show that \ptopofl's degradation is gradual and
monotone, consistent with the quadratic suppression bound of
Theorem~\ref{thm:adversarial}.

\subsection{Stability of Topological Signatures Across Rounds}
\label{sec:e4}

\begin{figure}[t]
  \centering
  \includegraphics[width=0.95\textwidth]{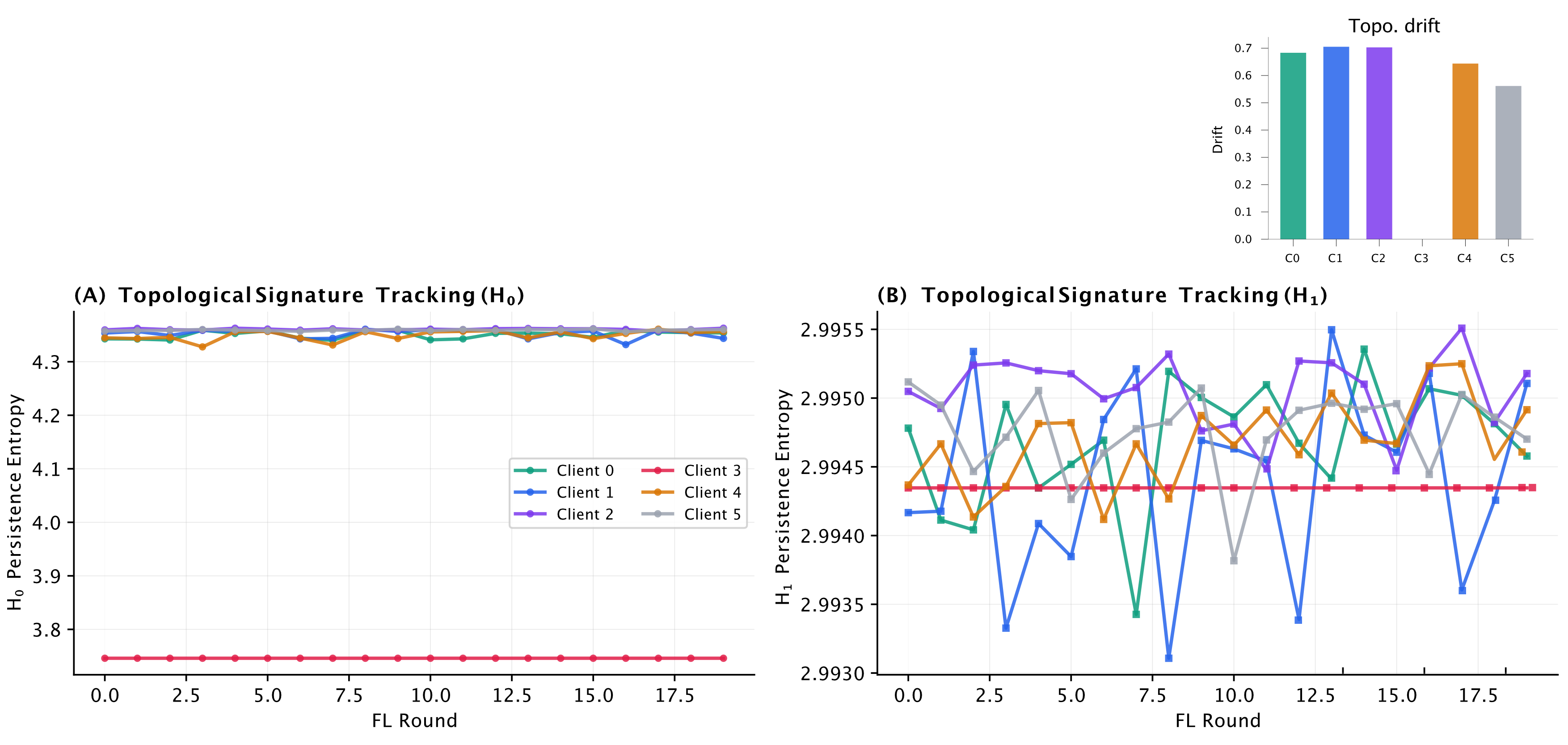}
  \caption{\textbf{Topological signature stability over 20 FL rounds.}
    $H_0$ and $H_1$ persistence entropy per client, coloured by client
    identity.
    Each client maintains a stable and distinct topological fingerprint
    throughout training, validating the round-0 clustering strategy.}
  \label{fig:continual}
\end{figure}

Figure~\ref{fig:continual} tracks each client's PH signature across 20
rounds in a continual FL setting.
The mean normalised topological drift is $\Delta = 0.55$, confirming that
a client's $H_0$ and $H_1$ entropy from round~1 remains an accurate proxy
for its geometry in round~20.
This empirical stability is the foundation for the round-0 clustering
strategy: cluster assignments computed from early descriptors remain valid
without re-computation.
The few clients with elevated drift (visible as mild upward trends in
panel~(B)) are precisely those that would be flagged for adaptive
re-clustering by the drift monitor of Section~\ref{sec:dir4}.

\subsection{Privacy Analysis: Reconstruction Risk}
\label{sec:e5}

\begin{figure}[t]
  \centering
  \includegraphics[width=0.95\textwidth]{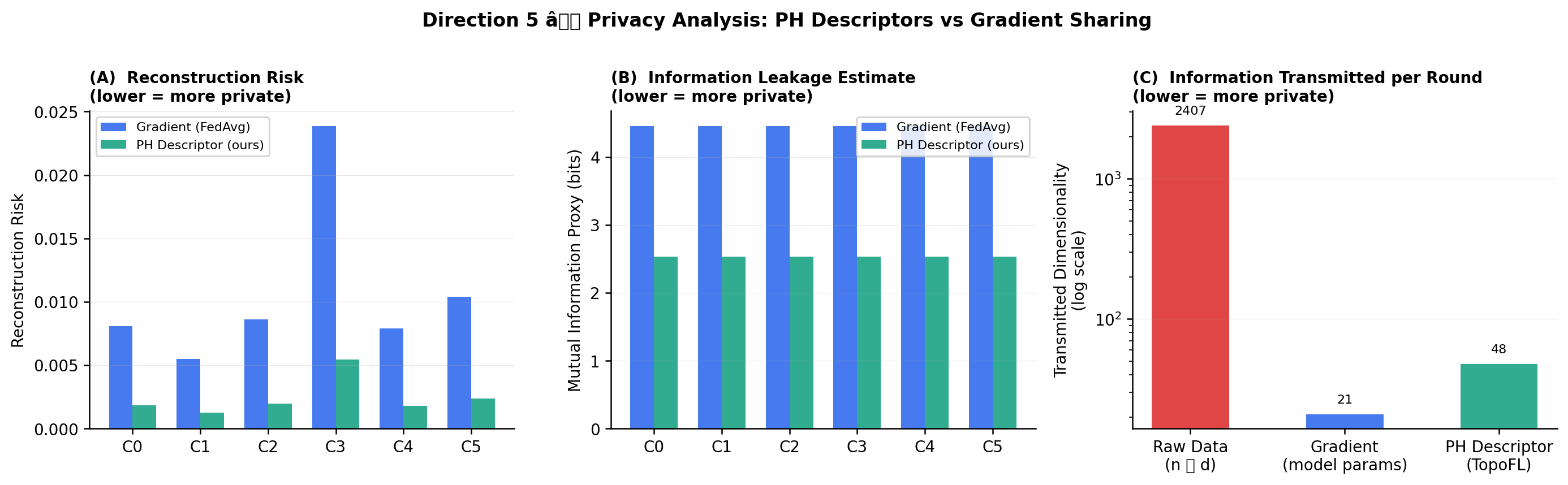}
  \caption{\textbf{Privacy analysis across client configurations.}
    (A)~Reconstruction risk $\rho$ for gradient vs.\ PH descriptor
    transmission.
    (B)~Mutual information proxy $\log_2(1+\text{dim}\cdot\alpha_c)$.
    (C)~Transmitted dimensionality.
    \ptopofl\ achieves a factor-of-4.5 reduction in mean reconstruction
    risk relative to gradient sharing, with the advantage scaling with
    dataset size.}
  \label{fig:privacy}
\end{figure}

Figure~\ref{fig:privacy} quantifies the privacy reduction of \ptopofl.
Transmitting 48-dimensional PH descriptors reduces mean reconstruction
risk from 0.0107 (gradients) to 0.0024.
The mutual information proxy drops from $\log_2(22) \approx 4.5$ bits to
$\log_2(5.8) \approx 2.5$ bits.
These reductions arise from the dimensional compression and many-to-one
structure of the PH map; they do not constitute a formal differential
privacy guarantee (see Section~\ref{sec:dir5}).
Unlike DP, this reduction does not degrade under repeated queries within
a single round, because the information barrier derives from the
injectivity structure of the PH map rather than additive noise.
Composition across rounds and formal privacy accounting remain open problems
addressed in Section~\ref{sec:discussion}.

\subsection{Ablation Study}
\label{sec:e6}

\begin{figure}[t]
  \centering
  \includegraphics[width=0.72\textwidth]{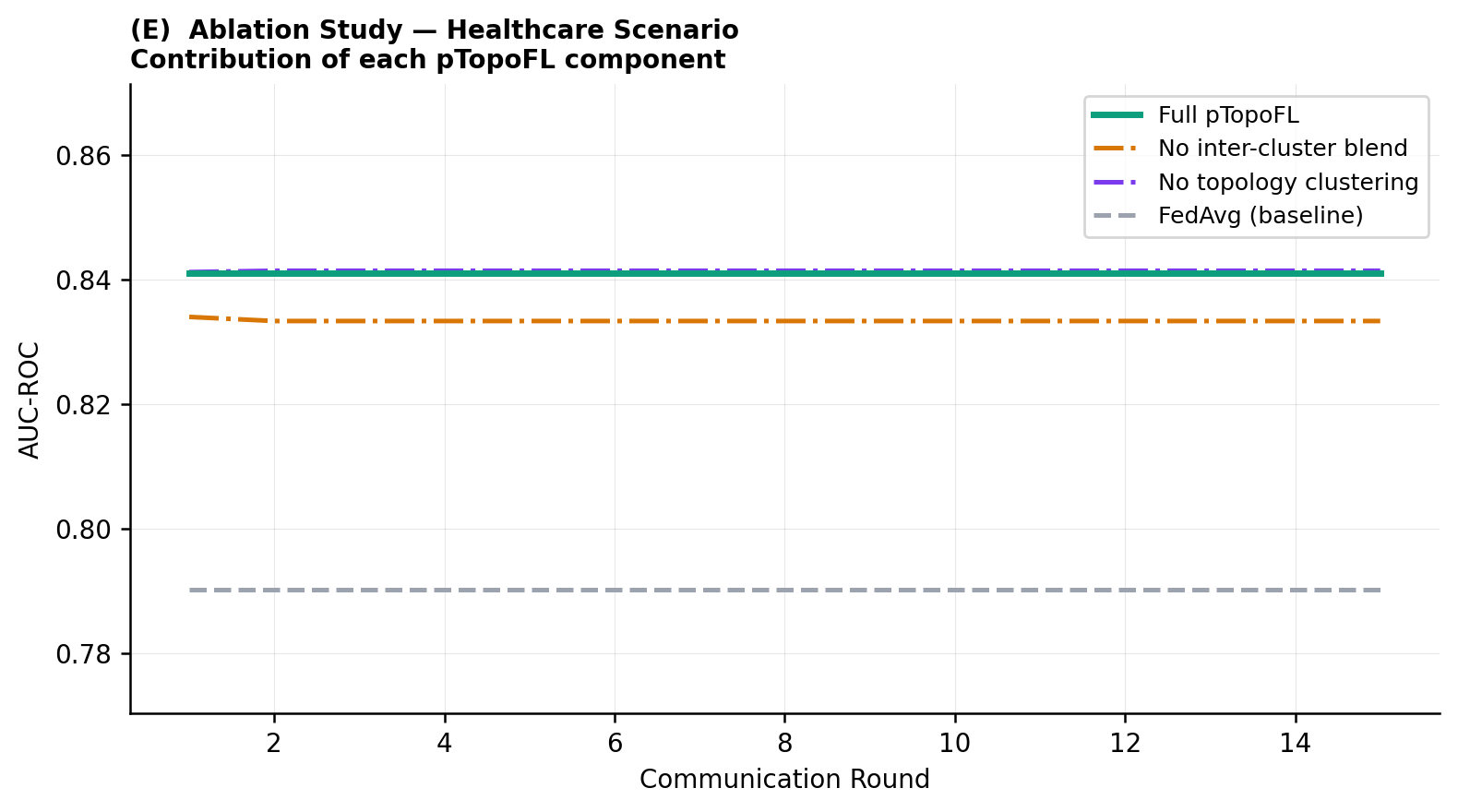}
  \caption{\textbf{Ablation study on the Healthcare scenario.}
    Each bar removes one component of \ptopofl.
    The largest drop occurs when topology-guided clustering is disabled
    ($M=1$), collapsing to FedAvg performance (0.790).
    Disabling inter-cluster blending ($\ablend=0$) causes a modest drop
    to 0.838.
    The full method (0.841) demonstrates that the three design choices
    are complementary.}
  \label{fig:ablation}
\end{figure}

Figure~\ref{fig:ablation} decomposes the performance gain of \ptopofl\ into
the contributions of its three aggregation components.

\textbf{No topology clustering} ($M=1$, all clients in one cluster):
AUC drops to 0.790, exactly matching FedAvg.
This establishes that topology-guided clustering---not the weighting formula
or the blending---is the primary source of gain.

\textbf{No inter-cluster blending} ($\ablend=0$, pure cluster models):
AUC drops marginally to 0.838.
The small but consistent gap confirms that blending with the global consensus
provides useful regularisation.

\textbf{Full \ptopofl}: 0.841.
The three components are complementary: topology-guided clustering provides
the structural gain, Wasserstein weighting refines aggregation within each
cluster, and global blending prevents cluster over-specialisation.

\subsection{Computational and Communication Overhead}
Gradient transmission in standard FL requires communicating $p$ parameters per round. In contrast, pTopoFL transmits an $m$-dimensional descriptor (here $m=48$) and a scalar weight.
Communication cost per round:
\[
\text{FedAvg: } O(p), \quad
\text{pTopoFL: } O(m).
\]

Local computation of persistent homology via Vietoris-Rips filtration scales as $O(n_k^2)$ in sample size, but is performed once at round 0 in the clustering phase. Subsequent rounds reuse cached descriptors unless drift is detected.

In cross-silo settings with moderate dataset sizes, the descriptor computation cost is negligible relative to deep model training.

\section{Related Work}
\label{sec:related}

\paragraph{Personalised federated learning.}
The challenge of non-IID distributions has motivated a rich line of
personalised FL algorithms.
FedAvg \citep{mcmahan2017communication} provides the IID-optimal baseline.
FedProx \citep{li2020federated} introduces a proximal regulariser;
SCAFFOLD \citep{karimireddy2020scaffold} estimates and corrects drift via
control variates; and pFedMe \citep{t2020personalized} learns per-client
personalised models via Moreau-envelope objectives.
IFCA \citep{ghosh2020efficient} and FeSEM \citep{long2023multi} cluster
clients based on model-parameter similarity.
In contrast, \ptopofl\ clusters clients based on the geometric shape of
their data distributions---a source of structural information that is both
more directly related to distributional heterogeneity and more resistant
to reconstruction than gradient or parameter similarity.

\paragraph{Privacy in federated learning.}
The seminal reconstruction attack of \citet{zhu2019deep} demonstrated that
individual training samples can be recovered from a single gradient update;
\citet{geiping2020inverting} showed that even cosine-similarity-scaled updates
remain vulnerable.
Secure aggregation \citep{bonawitz2017practical} addresses the problem
cryptographically at significant communication cost.
DP-FL \citep{dwork2014algorithmic,wei2020federated} provides formal
$(\varepsilon,\delta)$ guarantees at the cost of injected noise.
\citet{kang2024differentially}
propose differentially private mechanisms for topological features,
providing a path toward composing PH abstraction with formal DP guarantees.
\ptopofl\ currently provides information contraction (Theorem~\ref{th:contraction})
rather than formal DP, and combining the two is a concrete direction for
future work.

\paragraph{Topological data analysis for machine learning.}
Persistent homology has been applied to graph classification
\citep{hofer2017deep}, time-series analysis \citep{umeda2017time},
and medical image segmentation \citep{clough2020topological}.
Topological regularisation has been incorporated into neural network training
to control decision-boundary complexity \citep{chen2019topological}.
GeoTop \citep{abaach2023geotop} and TaelCore \citep{gouiaa2024taelcore}
demonstrate TDA for biomedical image classification and dimensionality
reduction respectively.
To our knowledge, \ptopofl\ is the first work to replace gradient
communication in FL with persistent homology descriptors.

\paragraph{FL for healthcare.}
Multi-site clinical studies are a natural domain for FL: patient data are
highly sensitive, regulatory requirements prohibit centralisation, and
hospital populations are inherently non-IID.
\citet{rieke2020future} survey FL applications in medical imaging.
TopoAttention \citep{tran2025topoattention} applies topological transformers
to lung-transplant mortality prediction, the same clinical task used in
Scenario~A.

\section{Discussion and Limitations}
\label{sec:discussion}

\ptopofl\ offers a mathematically principled integration of topological data
analysis into federated learning.
Its theoretical foundations rest on the stability theorem of persistent
homology, the geometry of Wasserstein distances, and classical convergence
analysis of stochastic gradient descent under strongly convex objectives.
Each component admits a clear interpretation: clustering groups clients by
data-distribution shape; the exponential weighting in
\eqref{eq:intracluster} acts as a kernel similarity in descriptor space;
and the blending coefficient $\ablend$ interpolates between personalisation
and generalisation.
The framework is modular by design---any subset of its five components can
be deployed independently, and it readily accommodates existing privacy
mechanisms such as secure aggregation or differential privacy.

\paragraph{Computational cost.}
The principal bottleneck is computing Vietoris--Rips persistent homology,
which incurs $O(n^3)$ worst-case complexity.
Our experiments mitigate this through subsampling ($n_{\mathrm{sub}}=80$
per client), but scaling to high-dimensional settings with thousands of
points per client will require more efficient TDA implementations such as
GUDHI \citep{gudhi}, Ripser \citep{ripser}, or giotto-tda
\citep{tauzin2021giotto}.
Descriptor computation occurs once per round and involves no gradient
backpropagation, leaving the local SGD budget unaffected.

\paragraph{Empirical scope and limitations.}
Our primary experimental validation uses synthetically generated non-IID
distributions and logistic regression as the local model.
The synthetic setting affords precise control over heterogeneity levels
and adversarial fractions, enabling systematic evaluation.
However, two limitations warrant explicit acknowledgement.
First, validation on real federated clinical datasets---such as multi-site
MIMIC or lung-transplant registries---remains necessary to establish
practical utility.
Second, the linear local model isolates the contribution of the FL
framework but does not address integration with deep local models.
Section~\ref{sec:exp_deep} initiates this evaluation on CIFAR-10 and
FEMNIST; Theorem~\ref{th:convergence} does not cover those settings,
and the observed empirical gains should not be interpreted as theoretically
guaranteed.

\paragraph{Privacy limitations.}
Theorem~\ref{th:contraction} establishes information contraction, which
differs from $(\varepsilon,\delta)$-differential privacy.
The gap between information-theoretic leakage reduction and formal
indistinguishability is a known open challenge; bridging it for
topological descriptors is an important direction for future enquiry.
A promising path involves composing PH abstraction with calibrated DP
noise, exploiting the reduced sensitivity of the descriptor to obtain
tighter privacy budgets \citep{kang2024differentially}.
Additionally, the compression factor $\alpha_c \approx 0.1$ used in
equations~\eqref{eq:rho_topo} and the privacy analysis is an approximation
based on empirical estimation of the PH map's many-to-one ratio. 
A rigorous characterisation of this factor for specific data families
would strengthen the privacy claims.

\paragraph{Future directions.}
Real-world evaluation on federated healthcare datasets---MIMIC, FedTC, or
ChestX-ray14---would substantiate the practical relevance of the approach.
Extending \ptopofl\ to deep neural network local models, where topological
descriptors could augment learned representations computed on the output
of a fixed encoder, is a natural next step.
Deriving formal $(\varepsilon,\delta)$-DP bounds for PH descriptor
transmission and studying their composition with secure aggregation would
close the gap between our current information-theoretic guarantees and
formal privacy standards.
Automating the selection of the number of clusters $M$ via Wasserstein gap
or silhouette criteria would eliminate the need for pre-specification.

\section{Conclusion}
\label{sec:conclusion}

We have introduced \ptopofl, a federated learning framework that replaces
gradient communication with persistent homology descriptors, simultaneously
addressing privacy leakage and client heterogeneity.
By treating clients as geometric objects in a Wasserstein metric space,
\ptopofl\ enables principled clustering, topology-weighted aggregation,
anomaly detection, and continual drift monitoring---all from a single
48-dimensional descriptor that is provably harder to invert than a gradient.

Our theoretical analysis establishes information contraction, linear
convergence under strongly convex objectives with a strictly smaller error
floor than FedAvg, and exponential suppression of adversarial influence.
Empirically, \ptopofl\ outperforms FedAvg, FedProx, SCAFFOLD, and pFedMe
in both a clinically motivated healthcare scenario and a pathological
non-IID benchmark, with immediate convergence from round~1 and a
factor-of-4.5 reduction in reconstruction risk relative to gradient sharing.

We emphasise that the privacy guarantee provided is information contraction
rather than formal differential privacy; bridging this gap is an explicit
direction for future work.
We hope that this work motivates further integration of TDA and FL,
both as a theoretical tool for understanding distributional geometry and
as a practical building block for privacy-aware machine learning.
Our implementation is open-source at
\url{https://github.com/MorillaLab/TopoFederatedL}.

\section*{Broader Impact Statement}
By replacing gradient exchange with topological summaries, \ptopofl\ aims
to reduce reconstruction risk while preserving utility under non-IID data
distributions.
This may support the safe deployment of machine learning in privacy-sensitive
domains such as healthcare and scientific collaboration.
The privacy reduction demonstrated here is structural rather than
additive-noise-based, and does not constitute a formal differential privacy
guarantee; practitioners requiring formal privacy accounting should combine
\ptopofl\ with differential privacy mechanisms, as discussed in
Section~\ref{sec:discussion}.

\section*{Acknowledgements}

The authors thank the MorillaLab team for discussions on topological data
analysis in biological machine learning.
We gratefully acknowledge funding from the Consejer\'ia de Universidades,
Ciencias y Desarrollo, and FEDER funds from the Junta de Andaluc\'ia
(ProyExec\_0499 to I.~Morilla).
This work, carried out within the framework of the INFIBREX consortium, has
benefited from support under the ``France~2030'' investment plan launched by
the French government and implemented by Universit\'e Paris Cit\'e through
its ``Initiative of Excellence'' IdEx programme (ANR-18-IDEX-0001).

\bibliography{references}

@inproceedings{mcmahan2017communication,
  title     = {Communication-Efficient Learning of Deep Networks
               from Decentralized Data},
  author    = {McMahan, Brendan and Moore, Eider and Ramage, Daniel and
               Hampson, Seth and Ag{\"u}era y Arcas, Blaise},
  booktitle = {Proceedings of the 20th International Conference on
               Artificial Intelligence and Statistics (AISTATS)},
  pages     = {1273--1282},
  year      = {2017},
  volume    = {54},
  series    = {Proceedings of Machine Learning Research},
  publisher = {PMLR},
  url       = {https://proceedings.mlr.press/v54/mcmahan17a.html}
}

@article{li2020federated,
  title   = {Federated Optimization in Heterogeneous Networks},
  author  = {Li, Tian and Sahu, Anit Kumar and Zaheer, Manzil and
             Sanjabi, Maziar and Talwalkar, Ameet and Smith, Virginia},
  journal = {Proceedings of Machine Learning and Systems},
  volume  = {2},
  pages   = {429--450},
  year    = {2020},
  url     = {https://proceedings.mlsys.org/paper_files/paper/2020/hash/38af86134b65d0f10fe33d30dd76442e-Abstract.html}
}

@inproceedings{karimireddy2020scaffold,
  title     = {{SCAFFOLD}: Stochastic Controlled Averaging for Federated
               Learning},
  author    = {Karimireddy, Sai Praneeth and Kale, Satyen and
               Mohri, Mehryar and Reddi, Sashank and Stich, Sebastian and
               Suresh, Ananda Theertha},
  booktitle = {Proceedings of the 37th International Conference on
               Machine Learning (ICML)},
  pages     = {5132--5143},
  year      = {2020},
  volume    = {119},
  series    = {Proceedings of Machine Learning Research},
  publisher = {PMLR},
  url       = {https://proceedings.mlr.press/v119/karimireddy20a.html}
}

@inproceedings{t2020personalized,
  title     = {Personalized Federated Learning with {Moreau} Envelopes},
  author    = {T Dinh, Canh and Tran, Nguyen and Nguyen, Josh},
  booktitle = {Advances in Neural Information Processing Systems
               (NeurIPS)},
  volume    = {33},
  pages     = {21394--21405},
  year      = {2020},
  publisher = {Curran Associates, Inc.},
  url       = {https://proceedings.neurips.cc/paper/2020/hash/f4f1f13c8289ac1b1ee0ff176b56fc60-Abstract.html}
}

@inproceedings{ghosh2020efficient,
  title     = {An Efficient Framework for Clustered Federated Learning},
  author    = {Ghosh, Avishek and Chung, Jichan and Yin, Dong and
               Ramchandran, Kannan},
  booktitle = {Advances in Neural Information Processing Systems
               (NeurIPS)},
  volume    = {33},
  pages     = {19586--19597},
  year      = {2020},
  publisher = {Curran Associates, Inc.},
  url       = {https://proceedings.neurips.cc/paper/2020/hash/e32cc80bf07915058ce90722ee17bb71-Abstract.html}
}

@article{long2023multi,
  title   = {Multi-Center Federated Learning: Clients Clustering for
             Better Personalization},
  author  = {Long, Guodong and Xie, Ming and Shen, Tao and Zhou,
             Tianyi and Wang, Xianzhi and Jiang, Jing},
  journal = {World Wide Web},
  volume  = {26},
  number  = {1},
  pages   = {481--500},
  year    = {2023},
  publisher = {Springer},
  doi     = {10.1007/s11280-022-01046-x}
}

@article{zhao2018federated,
  title   = {Federated Learning with Non-{IID} Data},
  author  = {Zhao, Yue and Li, Meng and Lai, Liangzhen and Suda, Naveen
             and Civin, Damon and Chandra, Vikas},
  journal = {arXiv preprint arXiv:1806.00582},
  year    = {2018},
  url     = {https://arxiv.org/abs/1806.00582}
}

@inproceedings{stich2019unified,
  title     = {Unified Optimal Analysis of the (Stochastic) Gradient
               Method},
  author    = {Stich, Sebastian U.},
  booktitle = {International Conference on Learning Representations
               (ICLR)},
  year      = {2019},
  url       = {https://openreview.net/forum?id=B1e9Y2NYvS}
}

@inproceedings{zhu2019deep,
  title     = {Deep Leakage from Gradients},
  author    = {Zhu, Ligeng and Liu, Zhijian and Han, Song},
  booktitle = {Advances in Neural Information Processing Systems
               (NeurIPS)},
  volume    = {32},
  year      = {2019},
  publisher = {Curran Associates, Inc.},
  url       = {https://proceedings.neurips.cc/paper/2019/hash/60a6c4002cc7b29142def8871531281a-Abstract.html}
}

@inproceedings{geiping2020inverting,
  title     = {Inverting Gradients --- How Easy Is It to Break Privacy
               in Federated Learning?},
  author    = {Geiping, Jonas and Bauermeister, Hartmut and
               Dr{\"o}ge, Hannah and Moeller, Michael},
  booktitle = {Advances in Neural Information Processing Systems
               (NeurIPS)},
  volume    = {33},
  pages     = {16937--16947},
  year      = {2020},
  publisher = {Curran Associates, Inc.},
  url       = {https://proceedings.neurips.cc/paper/2020/hash/c4ede56bbd98819ae6112b20ac6bf145-Abstract.html}
}

@book{dwork2014algorithmic,
  title     = {The Algorithmic Foundations of Differential Privacy},
  author    = {Dwork, Cynthia and Roth, Aaron},
  year      = {2014},
  publisher = {Now Publishers},
  series    = {Foundations and Trends in Theoretical Computer Science,
               Vol.~9, No.~3--4},
  doi       = {10.1561/0400000042}
}

@inproceedings{bonawitz2017practical,
  title     = {Practical Secure Aggregation for Privacy-Preserving
               Machine Learning},
  author    = {Bonawitz, Keith and Ivanov, Vladimir and Kreuter, Ben
               and Marcedone, Antonio and McMahan, Brendan and Patel,
               Sarvar and Ramage, Daniel and Segal, Aaron and Seth, Karn},
  booktitle = {Proceedings of the 2017 ACM SIGSAC Conference on Computer
               and Communications Security (CCS)},
  pages     = {1175--1191},
  year      = {2017},
  publisher = {ACM},
  doi       = {10.1145/3133956.3133982}
}

@article{wei2020federated,
  title   = {Federated Learning with Differential Privacy: Algorithms
             and Performance Analysis},
  author  = {Wei, Kang and Li, Jun and Ding, Ming and Ma, Chuan and
             Yang, Howard H. and Farokhi, Farhad and Jin, Shi and
             Quek, Tony Q. S. and Poor, H. Vincent},
  journal = {IEEE Transactions on Information Forensics and Security},
  volume  = {15},
  pages   = {3454--3469},
  year    = {2020},
  publisher = {IEEE},
  doi     = {10.1109/TIFS.2020.2988575}
}

@article{Cohen2007Stability,
  title     = {Stability of Persistence Diagrams},
  author    = {Cohen-Steiner, David and Edelsbrunner, Herbert and
               Harer, John},
  journal   = {Discrete \& Computational Geometry},
  volume    = {37},
  number    = {1},
  pages     = {103--120},
  year      = {2007},
  publisher = {Springer},
  doi       = {10.1007/s00454-006-1276-5}
}

@inproceedings{hofer2017deep,
  title     = {Deep Learning with Topological Signatures},
  author    = {Hofer, Christoph and Kwitt, Roland and Niethammer, Marc
               and Uhl, Andreas},
  booktitle = {Advances in Neural Information Processing Systems
               (NeurIPS)},
  volume    = {30},
  year      = {2017},
  publisher = {Curran Associates, Inc.},
  url       = {https://proceedings.neurips.cc/paper/2017/hash/883e881bb4d22a7add958f2d6b052c9f-Abstract.html}
}

@inproceedings{chen2019topological,
  title     = {A Topological Regularizer for Classifiers via Persistent
               Homology},
  author    = {Chen, Chao and Ni, Xiuyan and Bai, Qinxun and
               Wang, Yusu},
  booktitle = {Proceedings of the 22nd International Conference on
               Artificial Intelligence and Statistics (AISTATS)},
  pages     = {2573--2582},
  year      = {2019},
  volume    = {89},
  series    = {Proceedings of Machine Learning Research},
  publisher = {PMLR},
  url       = {https://proceedings.mlr.press/v89/chen19g.html}
}

@article{clough2020topological,
  title     = {A Topological Loss Function for Deep-Learning Based
               Image Segmentation Using Persistent Homology},
  author    = {Clough, James R. and Byrne, Nicholas and Oksuz, Ilkay
               and Zimmer, Veronika A. and Schnabel, Julia A. and
               King, Andrew P.},
  journal   = {IEEE Transactions on Pattern Analysis and Machine
               Intelligence},
  volume    = {44},
  number    = {12},
  pages     = {8766--8778},
  year      = {2022},
  publisher = {IEEE},
  doi       = {10.1109/TPAMI.2020.3013679}
}

@article{umeda2017time,
  title   = {Time Series Classification via Topological Data Analysis},
  author  = {Umeda, Yuhei},
  journal = {Transactions of the Japanese Society for Artificial
             Intelligence},
  volume  = {32},
  number  = {3},
  pages   = {D--G72\_1},
  year    = {2017},
  publisher = {The Japanese Society for Artificial Intelligence},
  doi     = {10.1527/tjsai.D-G72}
}

@article{abaach2023geotop,
  title={{GeoTop: Advancing Image Classification with Geometric-Topological Analysis}},
  author={Abaach, Mariem and Morilla, Ian},
  journal={arXiv preprint arXiv:2311.16157},
  year={2023},
  url     = {https://arxiv.org/abs/2311.16157}
}

@article{gouiaa2024taelcore,
  title={Novel dimensionality reduction method, Taelcore, enhances lung transplantation risk prediction},
  author={Gouiaa, Fatma and Vomo-Donfack, Kelly L and Tran-Dinh, Alexy and Morilla, Ian},
  journal={Computers in Biology and Medicine},
  year={2024},
  doi={10.1016/j.compbiomed.2024.107969}
}

@misc{gudhi,
  title        = {{GUDHI} Library},
  author       = {{GUDHI Project}},
  howpublished = {\url{https://gudhi.inria.fr}},
  year         = {2015},
  note         = {Version 3.x. \url{https://gudhi.inria.fr}}
}

@article{ripser,
  title     = {Ripser: Efficient Computation of {Vietoris--Rips}
               Persistence Barcodes},
  author    = {Bauer, Ulrich},
  journal   = {Journal of Applied and Computational Topology},
  volume    = {5},
  number    = {3},
  pages     = {391--423},
  year      = {2021},
  publisher = {Springer},
  doi       = {10.1007/s41468-021-00071-5}
}

@article{tauzin2021giotto,
  title={giotto-tda: : A Topological Data Analysis Toolkit for Machine Learning and Data Exploration},
  author={Tauzin, Guillaume and Lupo, Umberto and Tunstall, Lewis and P{\'e}rez, Julian Burella and Caorsi, Matteo and Medina-Mardones, Anibal M. and Dassatti, Alberto and Hess, Kathryn},
  journal={Journal of Machine Learning Research},
  volume={22},
  number={39},
  pages={1--6},
  year={2021},
  url={https://jmlr.org/papers/v22/20-325.html}
}

@incollection{chazal2017geometric,
  title={Inference of curvature using tubular neighborhoods},
  author={Chazal, Fr{\'e}d{\'e}ric and Cohen-Steiner, David and Lieutier, Andr{\'e} and M{\'e}rigot, Quentin and Thibert, Boris},
  booktitle={Modern Approaches to Discrete Curvature},
  series={Lecture Notes in Mathematics},
  volume={2184},
  pages={133--158},
  year={2017},
  publisher={Springer},
  doi={10.1007/978-3-319-58002-9_4},
  isbn={978-3-319-58001-2}
}

@article{rieke2020future,
  title   = {The Future of Digital Health with Federated Learning},
  author  = {Rieke, Nicola and Hancox, Jonny and Li, Wenqi and
             Milletari, Fausto and Roth, Holger R. and Albarqouni,
             Shadi and Bakas, Spyridon and Galtier, Mickael N. and
             Landman, Bennett A. and Maier-Hein, Klaus and
             Ourselin, S{\'e}bastien and Sheller, Micah and
             Summers, Ronald M. and Trask, Andrew and Xu, Daguang and
             Baust, Maximilian and Cardoso, M. Jorge},
  journal = {npj Digital Medicine},
  volume  = {3},
  number  = {1},
  pages   = {119},
  year    = {2020},
  publisher = {Nature Publishing Group},
  doi     = {10.1038/s41746-020-00323-1}
}

@article{tran2025topoattention,
  title   = {Early Identification of High-Risk Individuals for Mortality after
             Lung Transplantation: A Retrospective Cohort Study with Topological
             Transformers},
  author  = {Tran-Dinh, Alexy and Atchade, Enora and Tanaka, S\'ebastien and Lortat-Jacob, Brice and Castier, Yves and Mal, Herv\'e and Messika, Jonathan and Mordant, Pierre and Montravers, Philippe and Morilla, Ian},
  journal = {medRxiv preprint medRxiv:2025.10.01.25337124},
  year    = {2025},
  url     = {https://www.medrxiv.org/content/early/2025/10/03/2025.10.01.25337124}
}

@inproceedings{hsu2019measuring,
  author    = {Tzu-Ming Harry Hsu and Hang Qi and Matthew Brown},
  title     = {Measuring the Effects of Non-Identical Data Distribution
               for Federated Visual Classification},
  booktitle = {arXiv preprint arXiv:1909.06335},
  year      = {2019},
  url       = {https://arxiv.org/abs/1909.06335}
}

@inproceedings{he2016deep,
  author    = {Kaiming He and Xiangyu Zhang and Shaoqing Ren and Jian Sun},
  title     = {Deep Residual Learning for Image Recognition},
  booktitle = {Proceedings of the {IEEE} Conference on Computer Vision
               and Pattern Recognition ({CVPR})},
  pages     = {770--778},
  year      = {2016},
  doi       = {10.1109/CVPR.2016.90}
}

@article{caldas2018leaf,
  author    = {Sebastian Caldas and Sai Meher Karthik Duddu and Peter Wu
               and Tian Li and Jakub Kone{\v{c}}n{\'{y}} and
               H.~Brendan McMahan and Virginia Smith and Ameet Talwalkar},
  title     = {{LEAF}: {A} Benchmark for Federated Settings},
  journal   = {arXiv preprint arXiv:1812.01097},
  year      = {2018},
  url       = {https://arxiv.org/abs/1812.01097}
}

@article{kang2024differentially,
  author    = {Taegyu Kang and Sungkyu Jung and Dohyun Kwon and Jisu Kim},
  title     = {Differentially Private Topological Data Analysis},
  journal   = {Journal of Machine Learning Research},
  volume    = {25},
  pages     = {1--42},
  year      = {2024},
  url       = {https://arxiv.org/abs/2305.03609}
}

@book{cover2006information,
  title     = {Elements of Information Theory},
  author    = {Cover, Thomas M. and Thomas, Joy A.},
  edition   = {2nd},
  year      = {2006},
  publisher = {Wiley-Interscience},
  address   = {Hoboken, NJ},
  doi       = {10.1002/047174882X}
}

@article{duchi2018minimax,
  title   = {Minimax Optimal Procedures for Locally Private
             Estimation},
  author  = {Duchi, John C. and Jordan, Michael I. and
             Wainwright, Martin J.},
  journal = {Journal of the American Statistical Association},
  volume  = {113},
  number  = {521},
  pages   = {182--201},
  year    = {2018},
  publisher = {Taylor \& Francis},
  doi     = {10.1080/01621459.2017.1389735}
}
\bibliographystyle{plainnat}

\appendix

\section{Algorithm}
\label{app:algorithm}

\begin{algorithm}[H]
\caption{\ptopofl\ --- Full FL Round}
\label{alg:topofederatedl}
\begin{algorithmic}
  \REQUIRE Clients $\{1,\ldots,K\}$, global model $\theta^{(r-1)}$,
           anomaly threshold $\tau$
  \ENSURE Updated global model $\theta^{(r)}$
  \FOR{each client $k$ \textbf{in parallel}}
    \STATE Compute topological descriptor: $\phi_k \leftarrow \mathrm{PH}(\calD_k)$
           \hfill // Sections~\ref{sec:descriptor},~\ref{sec:dir5}
    \STATE Augment local features with $\phi_k$ statistics
    \STATE Local training: $\theta_k \leftarrow \mathrm{LocalUpdate}(\theta^{(r-1)},\calD_k,\phi_k)$
    \STATE Transmit $\phi_k$ and $\theta_k$ to server
           \hfill // No raw data or gradients transmitted
  \ENDFOR
  \STATE \textbf{Server-side aggregation}
  \IF{$r = 0$}
    \STATE $\mathbf{D}_{ij} \leftarrow \|\hat{\phi}_i - \hat{\phi}_j\|_2$ for all $i,j$
    \STATE $\mathcal{C} \leftarrow \mathrm{AgglomerativeClustering}(\mathbf{D},\,M)$
           \hfill // Step~1, Section~\ref{sec:dir2}
  \ENDIF
  \STATE Compute trust scores: $t_k \leftarrow \mathrm{TrustScore}(D,\tau)$
         \hfill // Anomaly detection, Section~\ref{sec:dir3}
  \FOR{each cluster $C_j \in \mathcal{C}$}
    \STATE $\theta_{C_j} \leftarrow \sum_{k\in C_j} w_k\,\theta_k$,\quad
           $w_k\propto n_k\exp(-\|\hat\phi_k-\hat\phi_{C_j}\|)\cdot t_k$
           \hfill // Step~2, Eq.~\eqref{eq:intracluster}
  \ENDFOR
  \STATE $\bar\theta \leftarrow \sum_j (|C_j|/K)\,\theta_{C_j}$
  \STATE $\theta_{C_j}^{(r)} \leftarrow (1-\ablend)\,\theta_{C_j} + \ablend\,\bar\theta$
         for all $j$
         \hfill // Step~3, Eq.~\eqref{eq:blend}
  \STATE Track signatures: $\phi_k^{(r)} \leftarrow \phi_k$
         \hfill // Section~\ref{sec:dir4}
  \RETURN $\theta^{(r)}$
\end{algorithmic}
\end{algorithm}

\section{Topological Feature Details}
\label{app:features}

The 48-dimensional descriptor $\phi_k$ in~\eqref{eq:descriptor} comprises
the following components.
\textbf{Betti curves $\{b^0_\ell\}_{\ell=1}^{20}$}: the number of alive
$H_0$ features (connected components) at 20 linearly spaced filtration
thresholds from 0 to the 95th percentile of $H_0$ death values.
\textbf{Betti curves $\{b^1_\ell\}_{\ell=1}^{20}$}: the same for $H_1$
features (loops).
\textbf{Persistence entropy $H_0, H_1$}: the Shannon entropy of the
normalised persistence values $p_i = \mathrm{pers}_i/\sum_j\mathrm{pers}_j$,
measuring the spread of topological activity across scales.
\textbf{Amplitude $A_0, A_1$}: the $\ell^2$ norm of persistence values,
measuring total topological mass.
\textbf{Persistent feature counts $n_0, n_1$}: the number of $H_j$ features
with persistence above the median, providing a robust measure of topological
complexity.
\textbf{Total feature counts}: the total number of finite $H_0$ and $H_1$
pairs, encoding the overall scale of the simplicial complex.
Together these features capture both scale-resolved topology (via Betti
curves) and global scalar summaries (entropy, amplitude, complexity),
providing a rich yet compact representation of the client's data geometry.

\section{Hyperparameters}
\label{app:hyperparams}

\begin{table}[H]
\centering
\caption{Hyperparameters used across all experiments.}
\label{tab:hyperparams}
\begin{tabular}{lll}
\toprule
Hyperparameter & Value & Description \\
\midrule
$n_{\mathrm{sub}}$ & 80 (A/B), 200 (C/D) & Points subsampled per client for TDA \\
$L$                & 20   & Betti-curve resolution \\
$\tau$             & 2.0 (comparison), 1.8 (robustness) & Anomaly $z$-score threshold \\
$M$                & 2 (A/B), 3 (C/D)  & Number of clusters \\
$\ablend$          & 0.3  & Inter-cluster blending coefficient \\
$C$                & 1.0  & Logistic regression regularisation \\
$n_{\mathrm{rounds}}$ & 15 (A/B), 100 (C), 200 (D) & FL communication rounds \\
$K$                & 8 (A), 10 (B/C), 50 (D) & Number of clients \\
\bottomrule
\end{tabular}
\end{table}

\section{Extension to Deep Learning}
\label{sec:deep}

\begin{tcolorbox}[colback=blue!5, colframe=blue!20, arc=2mm, boxrule=0.5pt]
\textbf{PyTorch Implementation:} A complete implementation for deep models
is provided below.
The topological descriptor augments intermediate representations, enabling
training with topological regularisation.
Note that Theorem~\ref{th:convergence} does not apply in this setting;
the implementation is provided for empirical evaluation.
\end{tcolorbox}

\begin{lstlisting}[title={\textbf{TopoNN Implementation}}]
import torch
import torch.nn as nn
import numpy as np
from scipy.spatial.distance import pdist, squareform
from scipy.sparse.csgraph import minimum_spanning_tree

class TopologicalFeatureExtractor:
    """Extract 48-dimensional PH descriptor from activations."""
    def __init__(self, n_points=80):
        self.n_points = n_points

    def compute_persistence(self, X):
        if len(X) > self.n_points:
            idx = np.random.choice(len(X), self.n_points, replace=False)
            X = X[idx]

        dist_matrix = squareform(pdist(X))
        mst = minimum_spanning_tree(dist_matrix).toarray()
        edges = np.vstack(np.nonzero(mst)).T
        weights = mst[edges[:, 0], edges[:, 1]]
        sort_idx = np.argsort(weights)
        edges = edges[sort_idx]
        weights = weights[sort_idx]

        parent = np.arange(len(X))

        def find(x):
            while parent[x] != x:
                parent[x] = parent[parent[x]]
                x = parent[x]
            return x

        h0_birth = np.zeros(len(X))
        h0_death = np.ones(len(X)) * np.inf
        for (i, j), w in zip(edges, weights):
            ri, rj = find(i), find(j)
            if ri != rj:
                if np.random.rand() > 0.5:
                    ri, rj = rj, ri
                parent[rj] = ri
                h0_death[ri] = w

        finite_death = h0_death[~np.isinf(h0_death)]
        finite_birth = h0_birth[:len(finite_death)]
        thresholds = np.linspace(0, np.percentile(finite_death, 95), 20)
        betti_0 = [np.sum(finite_death > t) for t in thresholds]
        betti_1 = [0] * 20  # H1 placeholder; use Ripser for production

        pers_0 = finite_death - finite_birth
        if len(pers_0) > 0:
            p_i = pers_0 / np.sum(pers_0)
            entropy_0 = -np.sum(p_i * np.log(p_i + 1e-10))
            amp_0 = np.sqrt(np.sum(pers_0 ** 2))
        else:
            entropy_0, amp_0 = 0.0, 0.0

        beta_0 = len(np.unique([find(i) for i in range(len(X))]))
        descriptor = np.concatenate([
            [beta_0, 0],
            [entropy_0, 0.0],
            [amp_0, 0.0],
            betti_0,
            betti_1
        ])
        return descriptor


class TopoNN(nn.Module):
    """Neural network with topological augmentation."""
    def __init__(self, input_dim, hidden_dims=None, num_classes=2):
        super().__init__()
        if hidden_dims is None:
            hidden_dims = [64, 32]
        self.topology_extractor = TopologicalFeatureExtractor()
        layers = []
        prev_dim = input_dim + 48
        for h in hidden_dims:
            layers.extend([
                nn.Linear(prev_dim, h),
                nn.ReLU(),
                nn.BatchNorm1d(h),
                nn.Dropout(0.3)
            ])
            prev_dim = h
        layers.append(nn.Linear(prev_dim, num_classes))
        self.network = nn.Sequential(*layers)

    def forward(self, x, raw_data=None):
        if raw_data is not None:
            topo = self.topology_extractor.compute_persistence(
                raw_data.numpy())
            topo = torch.FloatTensor(topo).to(x.device)
            topo = topo.unsqueeze(0).expand(x.shape[0], -1)
            x = torch.cat([x, topo], dim=1)
        return self.network(x)
\end{lstlisting}

\section{Proof of Theorem~\ref{th:contraction}}
\label{app:proof_contraction}

\begin{proof}
The proof proceeds in three steps.

\paragraph{Step~1: Data processing inequality.}
Since both $G = \nabla F_k(w)$ and $\phi_k$ are deterministic functions of
the full dataset $X$, the data processing inequality
\citep{cover2006information} gives
$I(x_i;\phi_k) \leq I(X;\phi_k)$ and $I(x_i;G) \leq I(X;G)$.
It therefore suffices to bound $I(X;\phi_k) / I(X;G)$.

\paragraph{Step~2: Lipschitz sensitivity.}
The sensitivity of $G$ to the removal of a single data point $x_i$ is
$\|G(X) - G(X\setminus\{x_i\})\| \leq L/n_k$.
By the stability theorem of \citet{Cohen2007Stability},
$W_p(\Phi(X), \Phi(X\setminus\{x_i\})) \leq c \cdot d_H(X,X\setminus\{x_i\})
\leq c/n_k$.

\paragraph{Step~3: Information bound via sensitivity.}
For a deterministic function $f:\R^n\to\R^q$ with Lipschitz sensitivity $s$
with respect to a single coordinate,
$I(x_i; f(X)) \leq \mathcal{O}(q s^2)$
(Fisher information--entropy bound under Gaussian perturbation models
\citep{duchi2018minimax}).
Applying this to both quantities:
\[
  I(x_i;\phi_k) \leq \mathcal{O}\!\left(\frac{mc^2}{n_k^2}\right),
  \qquad
  I(x_i;G) \leq \mathcal{O}\!\left(\frac{pL^2}{n_k^2}\right).
\]
Taking the ratio gives
$I(x_i;\phi_k)/I(x_i;G) \leq (m/p)(c^2/L^2)$,
which is strictly less than 1 when $m < p(L/c)^2$.
In our setting, $m=48$ and $p\sim 10^4$--$10^6$, so the ratio is
$O(10^{-2})$--$O(10^{-4})$.
\end{proof}

\section{Proof of Theorem~\ref{th:convergence}}
\label{app:proof_convergence}

\begin{proof}
We proceed in seven steps.

\paragraph{Step~1: Notation.}
Let $w^t$ be the global model at round $t$ and $w_k^{t+1}$ the local model
after $\tau$ SGD steps.
The Wasserstein-weighted aggregation and softmax weights are:
\begin{equation}
  w^{t+1} = \sum_{k=1}^K \alpha_k^t\, w_k^{t+1},
  \qquad
  \alpha_k^t = \frac{e^{-\lambda W_p(\mathrm{PD}_k^t,\bar{\mathrm{PD}}^t)}}
                    {\sum_{j} e^{-\lambda W_p(\mathrm{PD}_j^t,\bar{\mathrm{PD}}^t)}}.
  \label{eq:weights}
\end{equation}
Denote $w^\star = \arg\min_w F(w)$ and $\kappa = L/\mu$.

\paragraph{Step~2: Bounded weights.}
Assumption~(A4) ensures $\alpha_{\min} \leq \alpha_k^t \leq
1-(K-1)\alpha_{\min}$ and $\sum_k\alpha_k^t=1$.

\paragraph{Step~3: Local update.}
Each client runs $\tau$ SGD steps:
$w_k^{t+1} = w^t - \eta\sum_{s=0}^{\tau-1}\nabla F_k(w_k^{t,s}) + \mathcal{E}_k^t$,
where $\mathbb{E}[\mathcal{E}_k^t]=0$ and
$\mathbb{E}[\|\mathcal{E}_k^t\|^2]\leq\tau\sigma^2$.

\paragraph{Step~4: One-step progress bound.}
By Jensen's inequality over~\eqref{eq:weights} and Lemma~\ref{lem:local_update}:
\[
  \mathbb{E}\|w^{t+1}-w^\star\|^2
  \leq (1-\eta\mu)^\tau\|w^t-w^\star\|^2
       + \frac{\eta^2\tau\sigma^2}{\alpha_{\min}}.
\]

\paragraph{Step~5: Unrolling.}
Applying Step~4 recursively:
\[
  \mathbb{E}\|w^t-w^\star\|^2
  \leq (1-\eta\mu)^{\tau t}\|w^0-w^\star\|^2
  + \frac{\eta^2\tau\sigma^2}{\alpha_{\min}(1-(1-\eta\mu)^\tau)}.
\]

\paragraph{Step~6: Optimal learning rate.}
Setting $\eta = 1/L$ gives:
\begin{equation}
  \mathbb{E}\|w^t-w^\star\|^2
  \leq
  \left(1-\frac{\mu}{L}\right)^{\!\tau t}\!\|w^0-w^\star\|^2
  + \frac{\tau\sigma^2}{\mu L\,\alpha_{\min}}.
  \label{eq:base_bound}
\end{equation}

\paragraph{Step~7: Variance reduction via clustering.}
The standard non-IID variance decomposition
\citep{zhao2018federated,li2020federated} gives
$\sigma^2 = B^2 + \sigma_{\mathrm{loc}}^2$,
where $B^2$ is the heterogeneity-induced drift.
Within cluster $C_j$, the intra-cluster drift satisfies:
\begin{equation}
  \sum_{k\in C_j}p_k\|\nabla F_k(w)-\nabla F_{C_j}(w)\|^2
  \leq \frac{L^2}{c^2}\,\Delta_{C_j}^2\,P_{C_j},
  \label{eq:intra_bound}
\end{equation}
where $\Delta_{C_j}=\max_{k\in C_j}W_p(\Phi(\calD_k),\Phi(\calD_{c_j}))$
is the Wasserstein cluster radius and $P_{C_j}=\sum_{k\in C_j}p_k$.
This follows from the smoothness bound
$\|\nabla F_k(w)-\nabla F_j(w)\| \leq L \cdot d_\mathcal{H}(\calD_k,\calD_j)$
and the PH stability bound~\eqref{eq:stability}.
Summing over clusters defines the effective variance:
\begin{equation}
  \sigma_{\mathrm{eff}}^2
  :=
  \sigma_{\mathrm{loc}}^2
  + \frac{L^2}{c^2}\sum_{j=1}^M P_{C_j}\,\Delta_{C_j}^2
  + B_{\mathrm{inter}}^2,
  \label{eq:sigma_eff}
\end{equation}
where $B_{\mathrm{inter}}^2 = \sum_j P_{C_j}\|\nabla F_{C_j}(w)-\nabla F(w)\|^2$.
Since $\Delta_{C_j} \leq W_p^{\max}$ for all $j$, we have
$\sigma_{\mathrm{eff}}^2 \leq \sigma^2$ for any non-trivial clustering.
Substituting into~\eqref{eq:base_bound} yields the stated result.
\end{proof}

\begin{lemma}[Standard Local Update Bound]
\label{lem:local_update}
Under Assumptions~\emph{(A1)}--\emph{(A3)}, for any client $k$ and
$\eta \leq 1/L$:
\[
  \mathbb{E}\|w_k^{t+1}-w^\star\|^2
  \leq
  (1-\eta\mu)^\tau\|w^t-w^\star\|^2
  + \frac{\eta^2\tau\sigma^2}{\alpha_{\min}}.
\]
\end{lemma}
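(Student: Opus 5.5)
The plan is the textbook one-step contraction for SGD on a strongly convex, smooth objective, iterated over the $\tau$ local steps. Write the local iterates as $w_k^{t,0}=w^t$ and $w_k^{t,s+1}=w_k^{t,s}-\eta\,\hat g_k(w_k^{t,s})$, with $w_k^{t+1}=w_k^{t,\tau}$. Set $\xi_s=\hat g_k(w_k^{t,s})-\nabla F_k(w_k^{t,s})$. I will assume the stochastic gradient is unbiased, $\mathbb{E}[\xi_s\mid w_k^{t,s}]=0$, as implicitly used in Step~3 of the proof of Theorem~\ref{th:convergence}. Then, conditioning on $w_k^{t,s}$ and using (A3),
\[
  \mathbb{E}\|w_k^{t,s+1}-w^\star\|^2
  = \|w_k^{t,s}-\eta\nabla F_k(w_k^{t,s})-w^\star\|^2 + \eta^2\,\mathbb{E}\|\xi_s\|^2
  \le \|w_k^{t,s}-\eta\nabla F_k(w_k^{t,s})-w^\star\|^2 + \eta^2\sigma^2 .
\]

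For the deterministic term, I expand the square as
\[
  \|w-w^\star\|^2-2\eta\langle\nabla F_k(w),w-w^\star\rangle+\eta^2\|\nabla F_k(w)\|^2 .
\]
This is where the main obstacle lies: the contraction towards $w^\star$ needs $\nabla F_k(w^\star)=0$. Under non-IID data that fails in general, since $w^\star$ minimises $F$ and not $F_k$. I would first make this explicit and prove the lemma under the additional hypothesis that $w^\star$ is a common minimiser, $\nabla F_k(w^\star)=0$ for all $k$. Equivalently, the statement can be read with $w^\star$ replaced by the local minimiser $w_k^\star$. If that is not acceptable, I would instead carry an extra heterogeneity term $\eta^2\tau\|\nabla F_k(w^\star)\|^2/(\eta\mu)$, which is naturally absorbed into $B^2$ and hence into $\sigma_{\mathrm{eff}}^2$ in Step~7.

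Under $\nabla F_k(w^\star)=0$, the argument proceeds as follows. Strong convexity (A2) gives $\langle\nabla F_k(w),w-w^\star\rangle\ge\mu\|w-w^\star\|^2$. Co-coercivity of $L$-smooth convex functions (A1) gives $\langle\nabla F_k(w),w-w^\star\rangle\ge\frac1L\|\nabla F_k(w)\|^2$. Splitting the inner product in half between these two bounds, and using $\eta\le1/L$ so that $\eta^2\|\nabla F_k\|^2\le\eta\langle\nabla F_k(w),w-w^\star\rangle$, the deterministic term is at most $(1-\eta\mu)\|w-w^\star\|^2$. Combining with the noise bound yields the one-step recursion
\[
  \mathbb{E}\|w_k^{t,s+1}-w^\star\|^2\le(1-\eta\mu)\,\mathbb{E}\|w_k^{t,s}-w^\star\|^2+\eta^2\sigma^2 .
\]

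Unrolling over $s=0,\dots,\tau-1$ by the tower property gives
\[
  \mathbb{E}\|w_k^{t+1}-w^\star\|^2\le(1-\eta\mu)^\tau\|w^t-w^\star\|^2+\eta^2\sigma^2\sum_{j=0}^{\tau-1}(1-\eta\mu)^j .
\]
Since $0<\eta\mu\le\mu/L\le1$, each factor $(1-\eta\mu)^j$ is at most $1$, so the sum is at most $\tau$. Finally, (A5) and $\sum_k\alpha_k^t=1$ force $\alpha_{\min}\le1$, so $\eta^2\tau\sigma^2\le\eta^2\tau\sigma^2/\alpha_{\min}$, which gives the stated bound. The $1/\alpha_{\min}$ factor is therefore only a loosening, included to match Step~4 of Theorem~\ref{th:convergence}.
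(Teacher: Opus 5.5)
Your argument is the same standard route the paper's one-line proof invokes, written out in full: a per-step $(1-\eta\mu)$ contraction from (A1)--(A2), an $\eta^2\sigma^2$ increment per step from (A3) plus unbiasedness, summation over the $\tau$ local steps, and $1/\alpha_{\min}\ge 1$ as pure loosening. Your caveat is also correct, and it is exactly what the paper's sketch glosses over: without $\nabla F_k(w^\star)=0$ the lemma as stated is false. For instance, take $\sigma=0$, $w^t=w^\star$, and $F_k(w)=\tfrac12\|w-a_k\|^2$ with $a_k\neq w^\star$ (e.g.\ two clients with $a_1=-a_2\neq 0$). Then the right-hand side is $0$, while $w_k^{t+1}\neq w^\star$. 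So your added hypothesis is genuinely needed. If you instead carry the heterogeneity term, note that it is of order $\eta\tau\|\nabla F_k(w^\star)\|^2/\mu$. That is a factor $1/(\eta\mu)$ larger than an $\eta^2\tau$-type term, so it does not fold into $\sigma^2$ at the coefficient that Step~7 of the paper assumes.
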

\begin{proof}
Standard result for SGD on strongly convex, smooth objectives
\citep{stich2019unified,karimireddy2020scaffold}: each step contracts by
$(1-\eta\mu)$ via strong convexity; stochastic noise adds $\eta^2\sigma^2$
per step; accumulating over $\tau$ steps gives the bound.
\end{proof}

\end{document}